\documentclass{article}

\usepackage{microtype}
\usepackage{graphicx}
\usepackage{subcaption}
\usepackage{booktabs} 
\usepackage{multirow}

\PassOptionsToPackage{hyphens}{url}
\usepackage{hyperref}

\usepackage[preprint]{icml2026}

\usepackage{amsmath}
\usepackage{amssymb}
\usepackage{mathtools}
\usepackage{amsthm}

\usepackage{bbm}
\usepackage{amsfonts}
\usepackage{mathtools}
\usepackage{appendix}
\usepackage{xcolor}

\usepackage[capitalize,noabbrev]{cleveref}

\theoremstyle{plain}
\newtheorem{theorem}{Theorem}[section]
\newtheorem{proposition}[theorem]{Proposition}

\theoremstyle{definition}

\theoremstyle{remark}

\usepackage[textsize=tiny]{todonotes}

\def\dataset{\ensuremath{\mathcal{D}}}
\def\x{{\mathbf{x}}}

\def\b{{\mathbf{b}}}

\def\o{{\mathbf{o}}}
\def\z{{\mathbf{z}}}

\def\real{\mathbb{R}}

\def\dataset{\ensuremath{\mathcal{D}}}
\def\localdataset{\dataset^c}

\def\clients{\ensuremath{\mathcal{C}}}
\def\model{\theta}

\def\globallossfunction{{\loss}}

\def\W{\mathbf{W}}

\def\g{\mathbf{g}}

\def\Wout{\W^{(L)}}

\def\loss{\ensuremath{\mathcal{L}}}

\def\activeset{\mathcal{A}}
\def\gradlosswiatt{\frac{\partial \loss}{\partial \W_i^a}}

\def\slice{\mathbf{s}}

\DeclareMathOperator*{\argmin}{argmin}

\icmltitlerunning{No More Guessing}

\begin{document}

\twocolumn[
  \icmltitle{No More Guessing: a Verifiable Gradient Inversion Attack in Federated Learning}



  \icmlsetsymbol{equal}{*}

  \begin{icmlauthorlist}
    \icmlauthor{Francesco Diana}{uni,inria}
    \icmlauthor{Chuan Xu}{uni,inria,cnrs,i3s}
    \icmlauthor{André Nusser}{uni,inria,cnrs,i3s}
    \icmlauthor{Giovanni Neglia}{uni,inria}
  \end{icmlauthorlist}

  \icmlaffiliation{uni}{Université Côte d'Azur, France}
  \icmlaffiliation{inria}{Inria, France}
  \icmlaffiliation{cnrs}{CNRS, France}
  \icmlaffiliation{i3s}{I3S, France}

  \icmlcorrespondingauthor{Francesco Diana}{francesco.diana@inria.fr}

  \icmlkeywords{Machine Learning, ICML}

  \vskip 0.3in
]



\printAffiliationsAndNotice{}  

\begin{abstract}

Gradient inversion attacks threaten client privacy in federated learning by reconstructing training samples from clients’ shared gradients. Gradients aggregate contributions from multiple records and existing attacks may fail to disentangle them, yielding incorrect reconstructions with no intrinsic way to certify success. In vision and language, attackers may fall back on human inspection to judge reconstruction plausibility, but this is far less feasible for numerical tabular records , fueling the impression that tabular data is less vulnerable.

We challenge this perception by proposing a verifiable gradient inversion attack (VGIA) that provides an explicit certificate of correctness for reconstructed samples. Our method adopts a geometric view of ReLU leakage: the activation boundary of a fully connected layer defines a hyperplane in input space. VGIA introduces an algebraic, subspace-based verification test that detects when a hyperplane-delimited region contains exactly one record. Once isolation is certified, VGIA recovers the corresponding feature vector analytically and reconstructs the target via a lightweight optimization step.

Experiments on tabular benchmarks with large batch sizes demonstrate exact record and target recovery in regimes where existing state-of-the-art attacks either fail or cannot assess reconstruction fidelity. Compared to prior geometric approaches, VGIA allocates hyperplane queries more effectively, yielding faster reconstructions with fewer attack rounds.



\end{abstract}

\section{Introduction}
Federated Learning (FL)~\citep{mcmahan2017communication} is a framework that enables distributed training of machine learning models without transmission of private data from clients to a central server. By design, FL limits communication to local updates, such as gradients or parameters, under the assumption that keeping the data on the client guarantees privacy. However, recent research has fundamentally challenged this supposition, demonstrating that shared gradients retain significant imprint of the training data, allowing adversaries to reconstruct private inputs through Gradient Inversion Attacks (GIA) \citep{dlg, geiping_gi, kariyappa23a_cocktail, bakarsky2025spear++}.  


Early research primarily operated under the \textit{honest-but-curious} threat model, ignoring the risks posed by an untrusted third-party server or compromised infrastructure. 
To address this gap, \citet{fowl2022robbing} and \citet{curious} explored the \textit{malicious server} scenario, where the server actively manipulates model parameters to force clients data leakage.
Recent advancements in this domain have focused on overcoming batch size limitations \cite{diana2025cutting}, improving noise robustness \cite{scalemia} and reducing detectability \cite{garov2024hiding}.

Despite substantially advancing the practical power of gradient inversion, existing attacks cannot certify that a reconstruction matches the true underlying sample, unless they have access to some prior knowledge about the dataset.
In vision and language domains, this issue is often mitigated through human inspection; for example, \citet{mia} rely on crowd-sourced human evaluation to assess the recognizability of reconstructed images. In contrast, tabular data, which is central to many real-world federated learning applications~\citep{ProjectAIKYA,OgierduTerrail2025FedECA}, does not offer comparable semantic validation methods~\citep[Fig.~1]{tableak}. As noted by~\citet[Section~3]{tableak}, for an uninformed adversary it is often close to impossible to assess reconstruction quality in tabular settings, substantially complicating reliable privacy risk estimation. Figure~\ref{fig:verification} illustrates the difficulty to recognize erroneously reconstructed tabular data.

\begin{figure}
    \centering
    \includegraphics[width=1\linewidth]{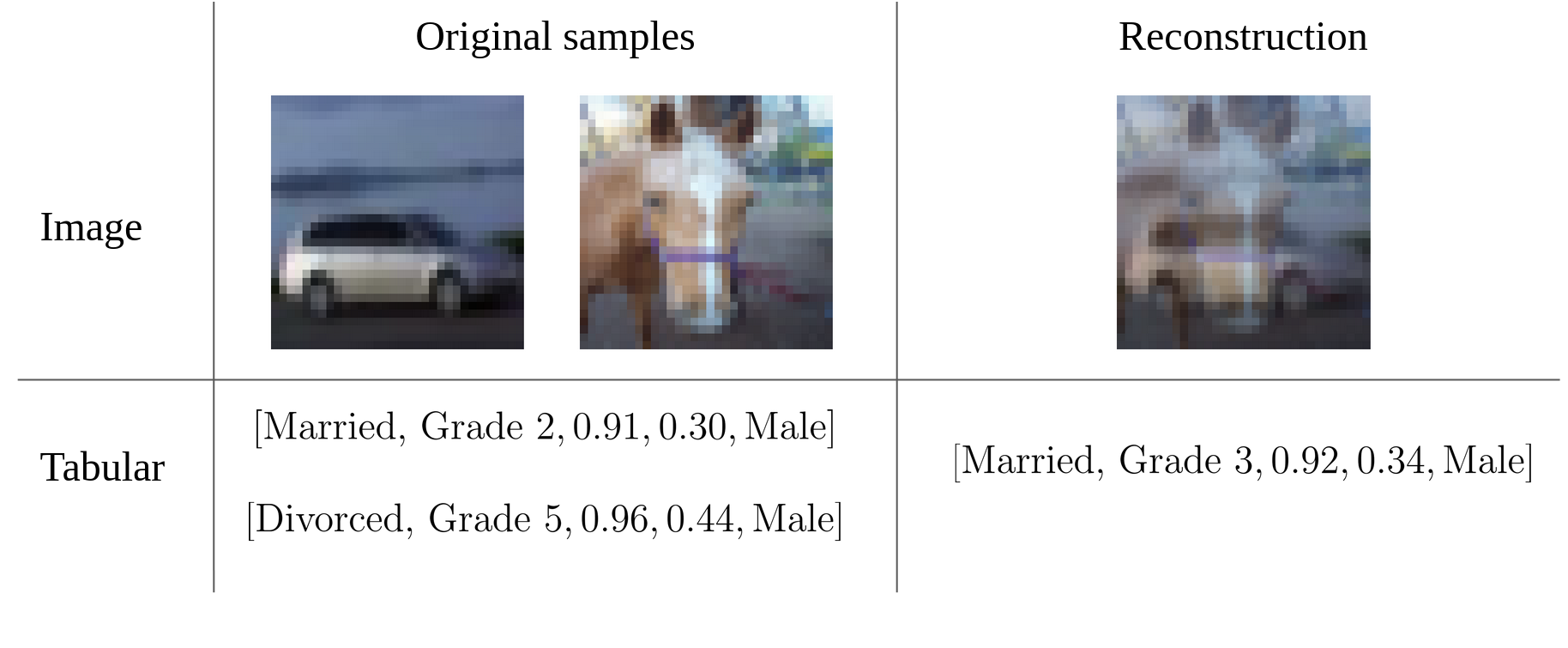}
    \caption{True samples and their erroneous reconstruction by CTP~\cite{diana2025cutting}.
    Top: images  from the CIFAR10~\cite{cifar10}. Bottom: tabular records from  ACS Income~\cite{income}.}
    \label{fig:verification}
\end{figure}


To date, only \citet{tableak} explicitly addresses this issue by proposing an entropy-based measure to quantify reconstruction certainty. However, their approach requires repeating the same optimization-based attack from ten to thirty times to obtain an ensemble of reconstructions, incurring significant computational overhead while still inheriting the batch-size limitations of optimization-based attacks in the honest-but-curious setting. By contrast, \citet{dimitrov2024spear} propose an analytical attack whose reconstructions can be verified with high probability, but their method requires the batch size to be smaller than the input dimension—a condition that is typically satisfied in vision and language settings but is impractical for tabular data.


To rigorously audit the privacy risk in tabular FL, we operate within the malicious server threat model. We present a novel reconstruction attack wherein the server, by crafting specific model parameters, successfully isolates individual samples within an aggregated batch, introducing a subspace verification method that allows the attacker to verify whether the isolation process is successful, eliminating the reliance on heuristics. Once the input features of a sample are recovered, our framework enables the exact reconstruction of the corresponding target value, resulting in a complete breach of the record. 
Our contributions are summarized as follows:
\begin{itemize}
    
    \item We introduce VGIA,  a novel analytical gradient inversion attack. Unlike prior heuristic methods, our approach certifies whether a reconstruction is correct, establishing a more rigorous baseline for privacy auditing.
    \item VGIA achieves exact reconstruction of both input features and---under a broad class of loss functions---target values. To the best of our knowledge, this is the first attack to provably recover continuous targets in regression settings.
    \item We empirically validate VGIA on both tabular and image datasets,  demonstrating exact record recovery even under large-batch aggregation and cross-domain applicability.
\end{itemize}

\section{Background and Related Work}
Federated Learning (FL) enables a set of clients \clients, coordinated by a central server, to collaboratively optimize a shared global model $\theta$. The training objective is to minimize the weighted average of the clients' empirical risks: 
\begin{equation}\label{eq:globalobjective} 
\min_{\model \in \real^d} \globallossfunction(\model) = \sum_{c\in \clients} p^c\loss(\model, \localdataset), 
\end{equation} 
where $\localdataset = \{(\x_i^c, y_i^c)\}_{i=1}^{n^c}$ denotes the local dataset of client $c$ 
and $p^c$ is a positive weight satisfying $\sum_{c\in \clients} p^c = 1$, typically proportional to the dataset size 
or set uniformly. 
The optimization proceeds in communication rounds. In each round, participating clients download the current global model, compute local updates using their data, and send these updates to the server. Depending on the algorithm, clients may perform either a single stochastic gradient update (FedSGD; \citealp{45187}) or multiple stochastic gradient steps (FedAvg; \citealp{mcmahan2017communication}).

Recent work shows that during FedSGD and FedAvg training, a server can reconstruct private client data via gradient inversion attacks (GIAs) by eavesdropping on exchanged messages in the \textit{honest-but-curious} setting or by crafting global model parameters in the \textit{malicious} setting.
Following~\citet{carletti_sok}, GIAs can be grouped into three families: optimization-based, generative model-based, and analytical-based. Below, we briefly review these approaches and highlight their limitations in terms of verifiability.

\paragraph{Optimization-Based attacks}
Optimization-based attacks were the first approach to recover client inputs from FL updates~\citep{dlg}. Starting from dummy inputs and labels, the attacker reconstructs data by minimizing the discrepancy between gradients computed on the dummy data and the observed pseudo-gradients. To improve reconstruction quality, several works decouple label and input recovery~\citep{idlg, e2egi, dimitrov2022data, geng2022generaldeepleakagefederated, ma2023instancewise}, often augmenting the objective with regularization terms or image priors~\citep{geiping_gi, yin_gi, dimitrov2022data, geng2022generaldeepleakagefederated}.
\citet{kariyappa23a_cocktail} propose separating per-sample gradient contributions via Independent Component Analysis, while~\citet{temporal} exploit information aggregated across multiple communication rounds. Nevertheless, reconstruction quality still degrades sharply for ImageNet batch sizes exceeding 256.
Under a malicious threat model,~\citet{wen2022fishing, gradfilt, shan2025geminio} propose attacks that selectively amplify per-sample gradient contributions, mitigating the obfuscating effects of large batch sizes. These methods assume the adversary has access to an auxiliary dataset drawn from a similar distribution.


\paragraph{Generative Model-Based attacks}
Optimization-based attacks become computationally prohibitive for high-dimensional data, e.g., high-resolution images. Consequently, recent work leverages generative models, including generative adversarial networks and diffusion models. 
In the honest-but-curious setting, instead of optimizing directly over the batch of inputs, the attacker trains a generative model to match the gradient of generated samples to the observed pseudo-gradient~\citep{GRNN, lti, ci-net, cgir, girg, FGLA, FGLA_jorunal, meng2025enhancedprivacyleakagenoiseperturbed, carletti2025guideenhancinggradientinversion}.
In the malicious setting, the server locally trains an encoder and a secret generative decoder using an auxiliary dataset. The encoder, instantiated as a crafted global model, either conceals the amplification of a target sample’s gradient~\citep{garov2024hiding} or facilitates reconstruction from a latent space~\citep{scalemia}. The decoder is then applied 
on the server to recover client inputs.

\paragraph{Analytical attacks}
Analytic attacks can recover \textit{exact} inputs by exploiting analytical properties of gradients on fully connected ReLU layers; we review the underlying theory  
in Sec.~\ref{subsec:preliminaries}. 
In the honest-but-curious setting,~\citet{zhu2021rgap} demonstrate the feasibility of such attacks on convolutional neural networks for batch size one.
Subsequent work~\citep{dimitrov2024spear, bakarsky2025spear++} extends these attacks to batch sizes up to 200 by exploiting the low-rank structure and sparsity of gradients, requiring a batch size smaller than the input dimension.
In the malicious setting,~\citet{curious, ZHANG2023119421} craft neurons that activate for a single sample per batch, but their approaches suffer fundamental accuracy limits, especially on low-dimensional data~\citep{diana2025cutting}. Other work~\citep{fowl2022robbing, fowl2023decepticons, chu2023panning, loki} uses pairs of neurons activating on the same set of samples except one, requiring prior knowledge of data distributions. \citet{diana2025cutting} remove this assumption via a binary search scheme, recovering up to 1024 ImageNet samples per batch in 10 rounds. In parallel,~\citet{mkor} decouple gradients by label, but requiring one sample per class in the batch.

\paragraph{Limitation on verifiability}
To assess the success of GIAs, prior work typically evaluates reconstructions \emph{against ground-truth samples} (available to the experimenter). For noisy reconstructions produced by optimization-based or generative attacks, image studies commonly report SSIM, PSNR, and LPIPS, while text studies often use exact match / token-level accuracy and sequence-overlap metrics such as BLEU or ROUGE. For analytic attacks, where the output is either an exact sample or a linear combination of samples, a common metric is the fraction of reconstructions that exactly match a true record (up to numerical tolerance).
As noted in~\cite{tableak}, these ground-truth-based metrics do not reflect the attacker’s perspective: without access to the original batch, the adversary lacks a per-reconstruction certificate of success.
Even an attack that consistently recovers  a small fraction of samples (e.g., $10\%$) can pose a serious privacy threat, but its practical impact is greatly reduced if the attacker cannot identify which reconstructions are correct. Ideally, GIAs should therefore provide an intrinsic verification mechanism. For vision and language data, verification may rely on human inspection, but this is largely infeasible for tabular records (Fig.~\ref{fig:verification}). Despite its practical importance, verifiability has been largely overlooked in the GIA literature; only a few works offer any form of verification, typically under restrictive conditions.

\citet{tableak} propose an optimization-based attack tailored to tabular data and introduce an entropy-based verification score that requires repeatedly solving the same optimization problem to obtain an ensemble of local minima. Feature-wise entropy is then computed across these minima, and the $25\%$ lowest-entropy features are deemed valid. However, this procedure is computationally expensive and still yields noisy, incomplete reconstructions with missing features, while providing only a heuristic notion of verification.
Although analytical attacks can yield \textit{exact} reconstructions rather than noisy ones, verification remains non-trivial. The approaches of~\citet{dimitrov2024spear, bakarsky2025spear++} require the batch size to be smaller than the input dimension, a condition rarely met in tabular datasets. 
Meanwhile, the attack of \citet{diana2025cutting} is verifiable only if the attacker has prior knowledge of the minimum separation between projected inputs within a batch---an assumption that is difficult to justify in practice.





\section{Threat model}\label{sec:threat_model}

We consider a threat model in which the central server acts as a \textit{malicious} adversary and actively modifies the model parameters sent to clients. This setting is consistent with prior work on malicious-server attacks in Federated Learning \citep{wen2022fishing, fowl2022robbing, curious, scalemia, diana2025cutting}.

Clients are assumed to trust the server and therefore apply the received model parameters without verification before computing and sending their updates. At the same time, the server cannot artificially modify the architecture of the model, for example by adding layers, modifying layers' connections  \cite{fowl2022robbing, loki} or activation functions. In contrast to \citet{fowl2022robbing, wen2022fishing}, we assume that the attacker has no auxiliary knowledge of the data distribution beyond known bounds on the input features. Furthermore, unlike \citet{curious, wen2022fishing, garov2024hiding, scalemia}, we do not assume access to any auxiliary or public datasets. In the following, we assume that clients update the global model using FedSGD with full-batch updates.

\section{Our attack}

\subsection{Preliminaries}\label{subsec:preliminaries}
\paragraph{Analytic Gradient Inversion} Our work builds upon the fundamental observation that the gradients of a fully connected (FC) layer followed by a ReLU activation can contain sufficient information to analytically recover input data \citep{phong, geiping_gi, fowl2022robbing, diana2025cutting}. 


Consider a model whose first (attacked) layer is a linear layer with $N$ neurons, parameterized by weights $\W^a \in \mathbb{R}^{N \times d}$ and biases $\b^a \in \mathbb{R}^N$. For neuron $i$, the post-activation output is
\[
z_i^a = \mathrm{ReLU}\!\left(\W_i^a \x + b_i^a\right),
\]
where $\x \in \mathbb{R}^d$ denotes the layer input. For a training example $(\x_j, y_j)$, let $z_{i,j}^ a$ denote the activation of neuron $i$. The gradients of the per-sample loss $\loss_j$ with respect to the weights and bias of neuron $i$ are
\begin{align}
 \frac{\partial \loss_j}{\partial \W_i^a}
& = \frac{\partial \loss_j}{\partial z_{i}^ a}\frac{\partial z_{i}^a}{\partial \W_i^a}
= \frac{\partial \loss_j}{\partial z_{i}^a}\, \x_j\, \mathbbm{1}_{z_{i,j}^a>0}, \label{eq:wi_zi}\\
\frac{\partial \loss_j}{\partial b_i^a}
& = \frac{\partial \loss_j}{\partial z_{i}^a}\frac{\partial z_{i}^a}{\partial b_i^a}
= \frac{\partial \loss_j}{\partial z_{i}^a}\, \mathbbm{1}_{z_{i,j}^a>0}. \label{eq:bi_zi}
\end{align}
The indicator $\mathbbm{1}_{z_{i,j}>0}$ encodes whether $\x_j$ activates neuron~$i$. Therefore, whenever $z_{i,j}>0$ and $\nabla_{b_i^a}\loss_j \neq 0$, the input can be recovered exactly from the gradients via
\begin{equation}
\label{eq:single_point_reconstruction}
\x_j = \frac{\partial \loss_j}{\partial \W_i^a}\left( \frac{\partial \loss_j}{\partial b_i^a} \right)^{-1}.
\end{equation}

In FedSGD, the server does not observe per-sample gradients; instead, it receives the gradient of the batch-averaged loss $\loss=\frac{1}{B}\sum_{j=1}^B \loss_j$, computed over the batch $\mathcal{B}=\{(\x_j,y_j)\}_{j=1}^B$. As shown in prior work~\citep{ZHANG2023119421, diana2025cutting}, taking the ratio of the corresponding gradient components as in~\eqref{eq:single_point_reconstruction} allows the server to recover a linear combination of the batch inputs (we call this an \emph{observation}):
\begin{align}
\o_i & \;=\; \frac{\partial \loss}{\partial \W_i^a}\left(\frac{\partial \loss}{\partial b_i^a}\right)^{-1}
\;=\; \sum_{j=1}^B \alpha_j \x_j, \label{eq:observation}\\
\alpha_j & \;=\; \frac{\frac{\partial \loss_j}{\partial b_i^a}}{\sum_{k=1}^B \frac{\partial \loss_k}{\partial b_i^a}}.
\end{align}
Only the samples $\x_j$ that activate neuron $i$ contribute to observation~$\o_i$, i.e., they are the only ones with $\alpha_j \neq 0$. We refer to this subset as the set of active samples and denote the set of their indices by $\activeset_i = \{j \in [B] \mid \W_i^a \x_j + b_i^a > 0\}$.

Disentangling individual samples from the aggregate $\o_i$ is the central challenge in analytic gradient inversion.

\paragraph{Geometric Isolation}
To address this, \citet{diana2025cutting} adopt a geometric view of ReLU-based leakage through the activation boundary. For neuron $i$, the weights $\W_i^a$ and bias $b_i^a$ define the hyperplane
$\mathcal{H}_i=\{\x\in\mathbb{R}^d \mid \W_i^a \x + b_i^a = 0\}$,
which partitions the batch into inputs on the ReLU-active side $\W_i^a \x + b_i^a > 0$ and on the inactive side $\W_i^a \x + b_i^a \le 0$. Fixing $\W_i^a$ and varying $b_i^a$ translates $\mathcal{H}_i$ along its normal direction, yielding a family of parallel hyperplanes that \emph{sweep} across the batch. Building on this interpretation, \citet{diana2025cutting} propose the CTP algorithm, which searches over $b_i^a$ to identify thresholds $\hat b_{i,1}^* < \dots < \hat b_{i,B+1}^*$ such that the slab between two consecutive hyperplanes contains exactly one batch element $\x_k$. 
The resulting sequence of gradient-ratio observations forms a linear system; under Assumption~4.1 in~\citet{diana2025cutting} which states that $\partial \loss_j/\partial b_i^a$ does not depend on $b_i^a$, this system can be solved to reconstruct the entire batch.\footnote{
While the presentation in~\citep{diana2025cutting} suggests that all inputs must be isolated (up to $\varepsilon$) before the resulting triangular system can be solved sequentially, their implementation already performs reconstructions even when some slices are wider than $\varepsilon$.
} Moreover, multiple hyperplanes can be queried within the same FedSGD round by attacking different neurons.

It is important to note that while CTP can detect when a slab contains no samples (and thus avoid placing further hyperplanes there), it cannot distinguish whether a non-empty slice contains one or multiple samples. Consequently, it keeps bisecting any slice that contains at least one point until two consecutive hyperplanes are within distance $\epsilon$ (a user-chosen parameter). Figure~\ref{fig:ctp_ok} illustrates a case where CTP correctly recovers three inputs after 6 rounds (3 hyperplanes per round, 18 hyperplanes total). In contrast, Fig.~\ref{fig:ctp_miss} shows a failure  where two inputs lie closer than $\varepsilon$; CTP stops before separating them, leading to incorrect reconstruction.

From the discussion above, two  key limitations of CTP emerge:
\begin{itemize}
    \item \textbf{Unverifiability}: CTP cannot certify the correctness of the reconstructed samples and produce systematic errors when the projections of two inputs onto the direction $\W_i^a$ are closer than $\varepsilon$. In a realistic blind threat model, the adversary lacks the prior knowledge needed to tune $\varepsilon$ empirically.
    \item \textbf{Inadaptability}: CTP bisects every non-empty slab down to resolution $\varepsilon$, potentially spending multiple attack rounds to further refine samples that were already isolated early on.
\end{itemize}

\begin{figure}[tb]
    \centering
    \subfloat[\label{fig:ctp_ok}]{%
        \includegraphics[width=0.45\linewidth]{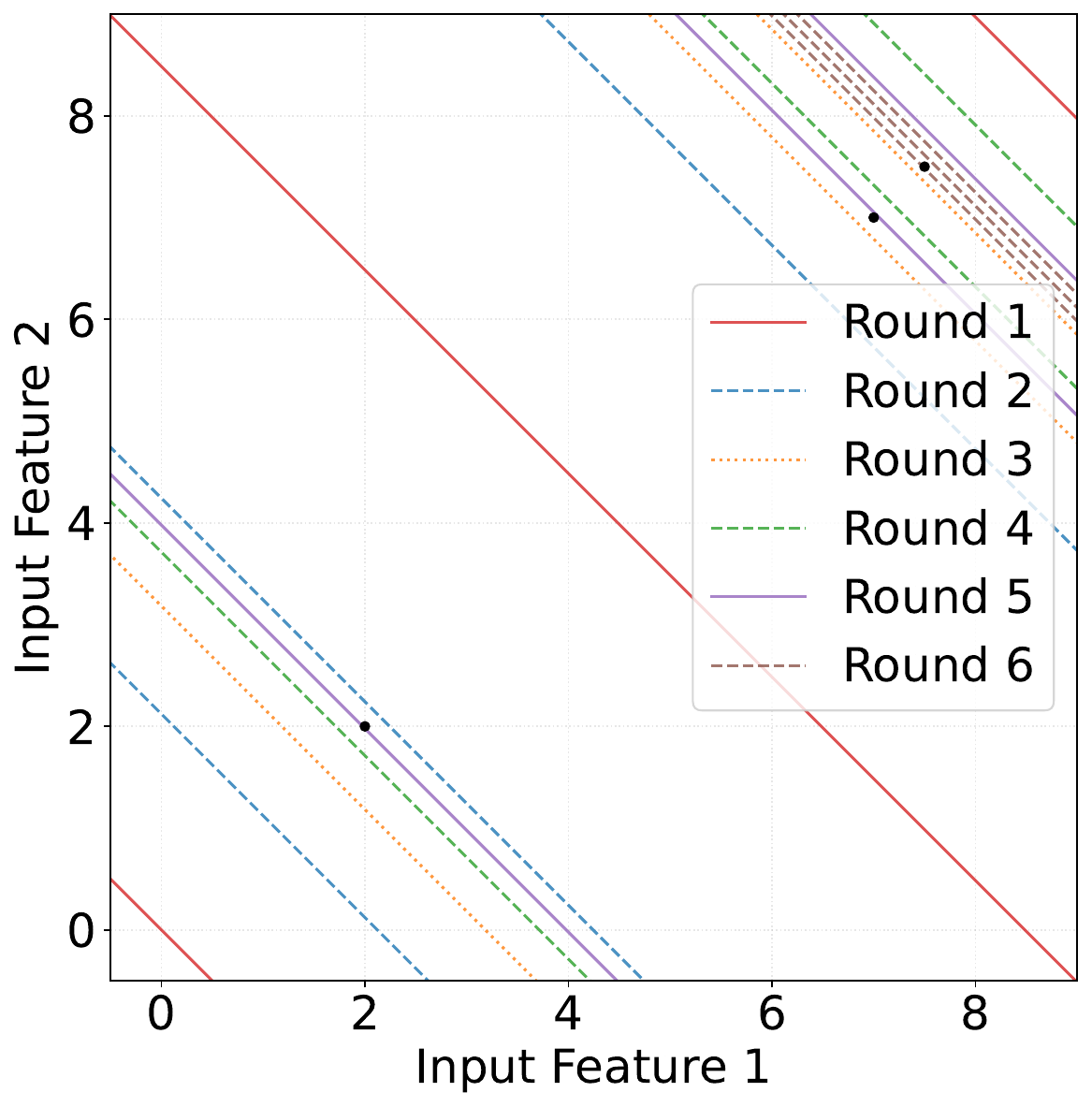}
    }\hfill 
    \subfloat[\label{fig:new_ok}]{%
        \includegraphics[width=0.45\linewidth]{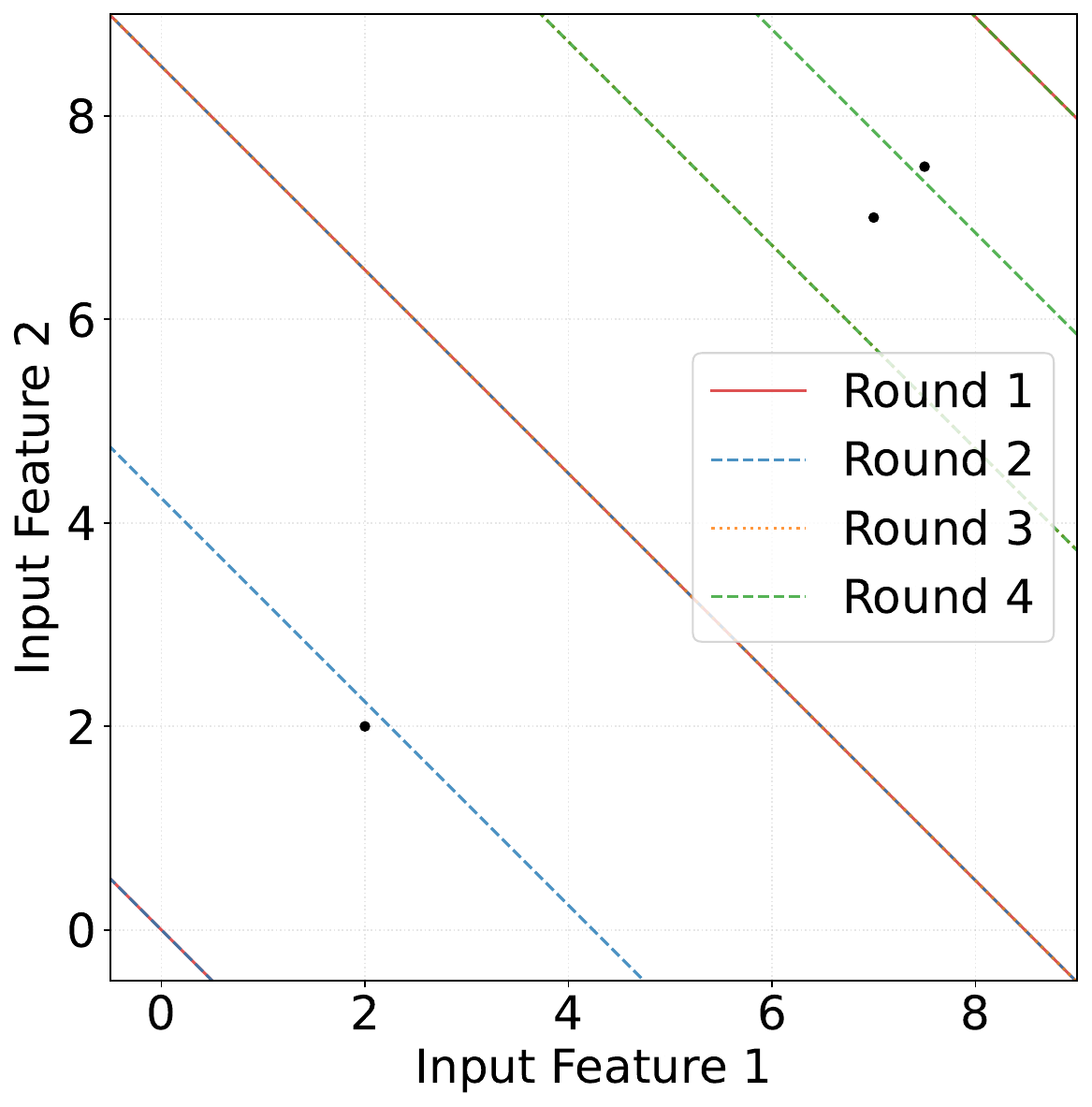}
    }

    \subfloat[\label{fig:ctp_miss}]{%
        \includegraphics[width=0.45\linewidth]{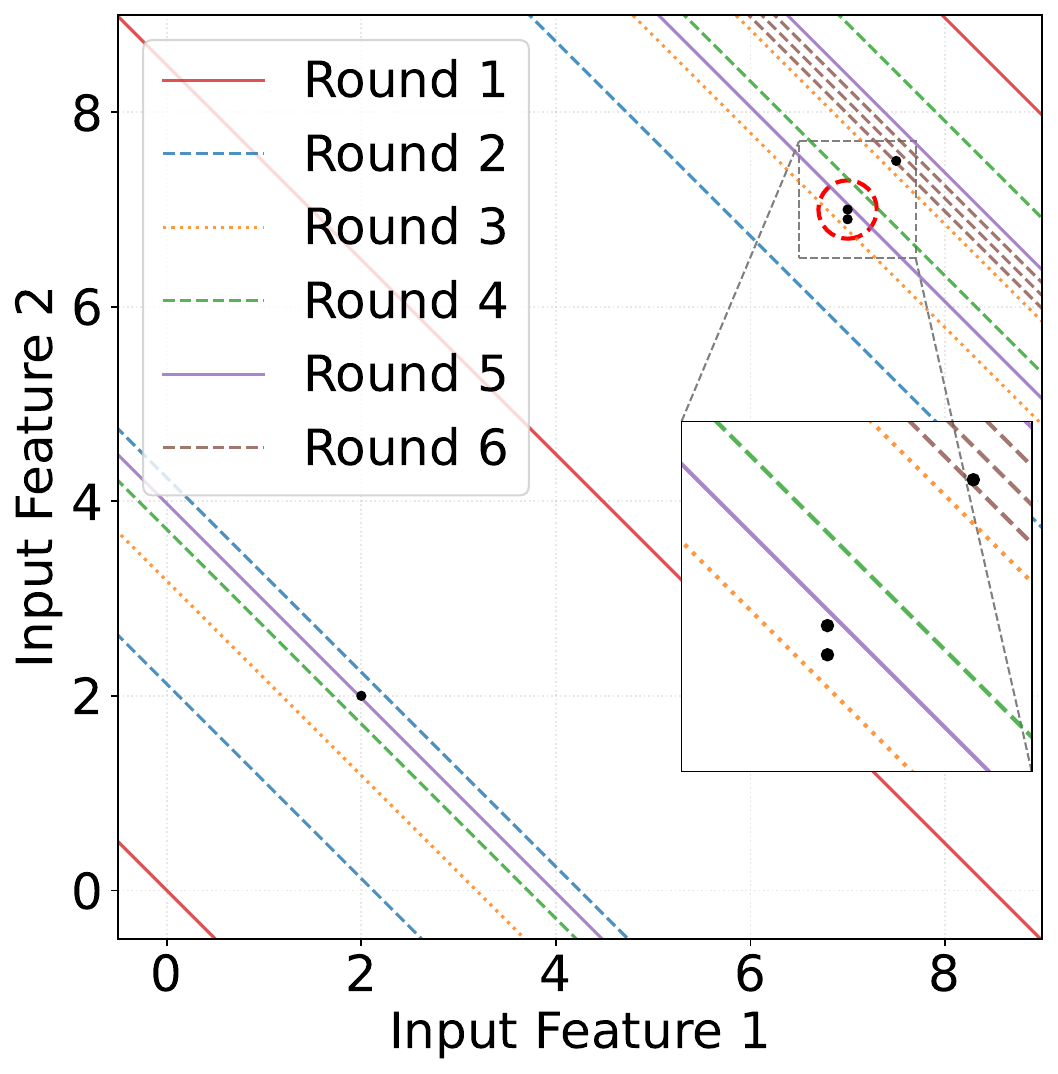}
    }\hfill
    \subfloat[\label{fig:new_adv}]{%
        \includegraphics[width=0.45\linewidth]{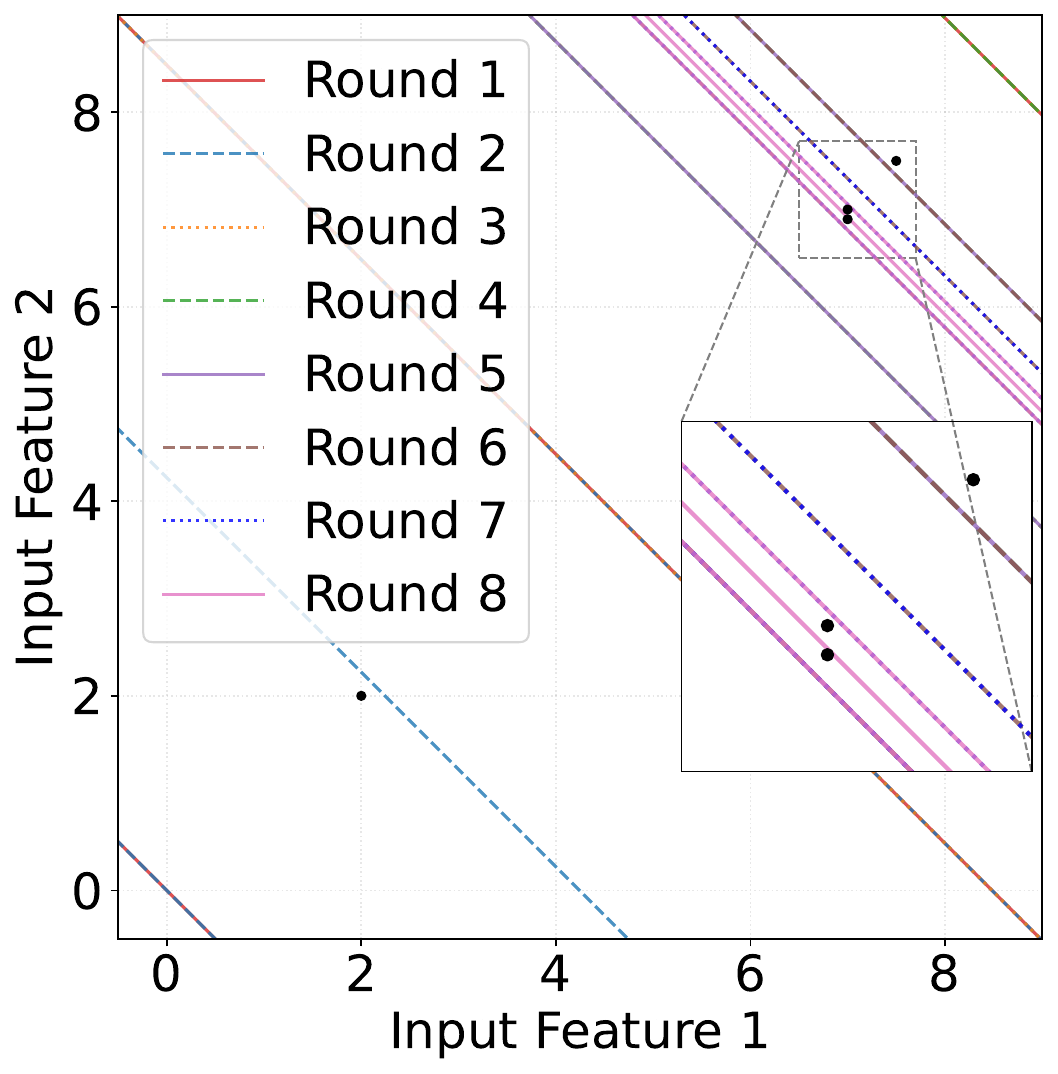}
    }

    \caption{A two-dimensional example of the search process. Figure~\ref{fig:ctp_ok} illustrates the process proposed by \citet{diana2025cutting}, while Figure~\ref{fig:new_ok} depicts our proposed method. In this instance, our method requires fewer rounds than their approach, as we avoid searching up to $\varepsilon$ distance across the entire space. Conversely, when $\varepsilon$ not well-tuned, Figure~\ref{fig:ctp_miss} shows that the CTP attack fails, whereas our attack (Figure~\ref{fig:new_adv}) continues searching until all data points are exactly separated.}
    \label{fig:main_fig}
\end{figure}
\subsection{Attack method}\label{sec:attack_method}

Our new dynamic attack addresses the two limitations of CTP. First, it introduces an algebraic step that makes isolation events verifiable: the attacker can certify when a slice contains a single input. Second, once a slice is certified to contain a single input, our algorithm removes it from further refinement and may allocate future hyperplane queries adaptively across unexplored regions, avoiding unnecessary dissections and reducing the number of communication rounds required for full-batch reconstruction. 

Figure~\ref{fig:new_ok} illustrates that, by correctly identifying slices that contain a single input, our attack reconstructs all samples in only $4$ rounds ($12$ hyperplanes), compared to $6$ rounds ($18$ hyperplanes) for CTP (Fig.~\ref{fig:ctp_ok}). Moreover, Fig.~\ref{fig:new_adv} shows that our protocol adapts the number of rounds and hyperplanes to the specific batch: in this example it continues for $8$ rounds to isolate all inputs, whereas CTP (Fig.~\ref{fig:ctp_miss}) stops too early and produces an incorrect reconstruction.

We present the attack for an $L$-layers fully connected neural network with one scalar output and ReLU activation functions. The set of parameters is denoted by $\W$. The extension to multi-dimensional regression and multi-class classification tasks is described in Appendix~\ref{app:class_ext}.

As above, the first layer (the target of our attack) is parameterized by weights $\W^a \in \real^{N\times d}$ and biases $\b^a \in \real^ N$, followed by a ReLU activation. 
The last layer is parameterized by weights $\Wout \in \real^N$ and a bias $b^{(L)} \in \real$. 

\paragraph{Verifiable isolation}
The first step of our attack requires the attacker to spatially isolate each sample, building upon the geometric framework established by \citet{diana2025cutting}. 
Within a single round, the attacker can test multiple parallel hyperplanes with normal $\mathbf w$ by targeting different neurons and setting their weight vectors to $\mathbf w$ (e.g., $\W_i^a=\W_{i'}^a=\mathbf w$ when attacking neurons $i$ and $i'$), while assigning them different biases (so $b_i^a \neq b_{i'}^a$). For simplicity, we assume the attacker targets all neurons in the layer.
The attacker seeks to identify a sequence of bias values $\hat{b}_{1}^*<\dots <\hat{b}_{B+1}^*$ for the attack layer such that  each consecutive slice contains a single sample.

We first analyze the gradient of the loss with respect to neuron $i$'s weights  $\gradlosswiatt$. 
\begin{align}
    \gradlosswiatt &= \frac{1}{B} \sum_{j=1}^B \frac{\partial \loss_j}{\partial z_{i}^a}\, \x_j\, \mathbbm{1}_{z_{i,j}^a>0} = \frac{1}{B} \sum_{j \in \activeset_i} \frac{\partial \loss_j}{\partial z_{i}^a}\, \x_j\\
    & = \frac{1}{B} \sum_{j \in \activeset_i} \frac{\partial \loss_j}{\partial b_{i}^a}\, \x_j =\frac{1}{B} \sum_{j \in \activeset_i}
    \frac{\partial \loss_j}{\partial z^L}\frac{\partial z^L}{\partial b_i^a}\x_j, \label{eq:weight_gradient}
\end{align}
where the first equality follows from~\eqref{eq:wi_zi} and the third from~\eqref{eq:bi_zi}.

In general, the  gradient $\frac{\partial z^L}{\partial b_i^a}$  from the network output to neuron $i$ in layer $a$ depends on the input $\x$, because downstream ReLU gates may switch on/off as $\x$ changes. Concretely, $\frac{\partial z^L}{\partial z_i^a}(\x)$ is typically $\x$-dependent. In our threat model, however, the attacker can choose the parameters of the subsequent layers $(\W^{a+1:L}, \b^{a+1:L})$ so as to keep the post-$a$ network in a fixed linear region for all $\x$ of interest. Since ReLU networks are piecewise linear, once the activation pattern of all ReLUs after layer $a$ is fixed, $z^L$ becomes an affine function of $z^a$, and therefore the Jacobian entry $\frac{\partial z^L}{\partial z_i^a}$ is a constant. The attacker can then enforce the following condition:
\begin{equation}
\label{eq:attack}
\frac{\partial z^L}{\partial z_i^a}(\x)=g_i \neq 0,\qquad \forall \x.
\end{equation}
Intuitively, the attacker selects $\W^{a+1:L}$ to predefine which downstream ReLUs always pass the signal and which always block it, independently of $\x$. A simple choice is to use positive weights and biases so that all downstream pre-activations remain positive, making all ReLUs always active; more generally, the attacker can realize a prescribed, input-independent gating pattern using mixed-sign weights together with biases that keep each downstream pre-activation strictly on one side of zero, yielding realistic-looking weight distributions.

As $\frac{\partial z^L}{\partial z^a_i}= \frac{\partial z^L}{\partial b^a_i}$ for all inputs that activate neuron $i$, we have that \eqref{eq:attack} guarantees
\begin{equation}\label{eq:lin_comb_err}
    \frac{\gradlosswiatt}{g_i}= \frac{1}{B} \sum_{j \in \activeset_i} \frac{\partial \loss_j}{\partial z^{L}} \x_j,
\end{equation}
where $\frac{\partial \loss_j}{\partial z^{L}}$ depends only on the loss function and on the sample $\x_j$.

Consider two consecutive bias values $\hat b_k < \hat b_{k+1}$ tested at two attacked neurons. Without loss of generality, assume these neurons are $i$ and $i+1$, and we call the region delimited by the two hyperplanes $(\mathbf w,\hat b_k)$ and $(\mathbf w,\hat b_{k+1})$ the \emph{$k$-th slice}. From~\eqref{eq:lin_comb_err} and the analogous expression for neuron $i+1$, the server can compute what we call the \emph{slice vector} $\slice_k$, which is a linear combination of the inputs lying between these two hyperplanes:
\begin{align}
   \slice_k &=  \frac{\frac{\partial \loss}{\partial \W_{i+1}^a}}{g_{i+1}} - \frac{\frac{\partial \loss}{\partial \W_{i}^a}}{g_i} \nonumber\\
   &= \frac{1}{B} \sum_{\x_j \in \activeset_{i+1}} \frac{\partial \loss_j}{\partial z^{L}} \x_j - \frac{1}{B} \sum_{\x_j \in \activeset_{i}} \frac{\partial \loss_j}{\partial z^{L}} \x_j  \nonumber \\
    &= \frac{1}{B} \sum_{\substack{j \in \activeset_{i+1} \\ j \notin \activeset_{i}}} \frac{\partial \loss_j}{\partial z^{L}} \x_j \label{eq:hp_diff}.
\end{align}


Note how the rescaling in \eqref{eq:lin_comb_err} plays a central role in the attack, as it aligns gradient observations across consecutive hyperplanes in \eqref{eq:hp_diff} and enables the exact cancellation of sample contributions that activate both neurons. Without this step, differences between consecutive gradients would retain contributions from all samples in $\activeset_i$, since the scaling factors $g_i$ and $g_{i+1}$ generally differ. 


When the interval between two consecutive hyperplanes contains no data points, the corresponding active sets $\activeset_i$ and $\activeset_{i+1}$ coincide, producing a null slice vector $\slice_k=\mathbf{0}\in\real^d$. Such empty slices indicate that no further exploration is required in that region. Accordingly, in subsequent communication rounds, the server updates the bias values  to explore only those intervals where non-empty slices are detected, effectively pruning the search space.

Moreover, as we show next, if the attacker allocates $q\ge 3$ hyperplanes to probe a non-empty slice in the next attack round, then after that round it can certify whether the slice contains at most $q-2$ samples. Whenever this condition holds, the attacker can exactly decode every sample in the slice, without any further exploration rounds.

Indeed, consider the non-empty slice at attack round $t-1$ identified by the bias values $\hat b_k^{t}$ and $\hat b_{k+1}^{t}$ and the corresponding slice vector $\slice_k^t$. At the next attack round, the attacker will test $q$ bias values in the interval $[\hat b_k^{t}, \hat b_{k+1}^t]$, that is 
$\hat b_{k}^{t+1} (= \hat b_{k}^{t}) < \hat b_{k+1}^{t+1} < \dots \hat b_{k+q-1}^{t+1} (=\hat b_{k+1}^{t})$, and compute the corresponding slice vectors $\slice_k^{t+1}, \slice_{k+1}^{t+1}, \dots \slice_{k+q-1}^{t+1}$.

\begin{proposition}
\label{prop:sample_isolation}
Consider the slice delimited by the hyperplanes $\mathbf w \x + \hat b_k^t=0$ and $\mathbf w \x + \hat b_{k+1}^t=0$. Assume all inputs in the slice  are linearly independent,
 all level sets of $\frac{\partial \loss_j }{\partial z^L}$ are Lebesgue-null, and $\W^L$ is drawn at each attack round from an absolutely continuous distribution. The slice contains at most $q-2$ samples with probability $1$ if and only if 
\begin{equation}
\label{eq:subspace_condition}
\slice_k^t \in \mathrm{span}\left(\slice_k^{t+1}, \slice_{k+1}^{t+1}, \dots \slice_{k+q-1}^{t+1}\right).
\end{equation}
Moreover, when this condition is satisfied, the number of samples equals the number of non-null slice vectors~$\slice_h^{t+1}$.
\end{proposition}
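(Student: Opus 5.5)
Write $S$ for the set of samples lying in the slice at round $t$, and $m=|S|$. For every $j\in S$ set $c_j^t:=\partial\loss_j/\partial z^L$ evaluated at round $t$, and define $c_j^{t+1}$ in the same way at round $t+1$. The $q$ biases probed at round $t+1$ split $S$ into consecutive sub-slices $S_h$. These are pairwise disjoint and their union is $S$, since the two outer biases coincide with the old ones. By \eqref{eq:hp_diff},
\[
\slice_k^t=\tfrac1B\sum_{j\in S}c_j^t\x_j,\qquad \slice_h^{t+1}=\tfrac1B\sum_{j\in S_h}c_j^{t+1}\x_j .
\]
I read the statement as follows: the span condition \eqref{eq:subspace_condition} holds a.s. if and only if every sub-slice contains at most one sample. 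This yields the bound on the number of samples and the counting claim. The plan is to transport everything to coefficient space, using the assumed linear independence of $\{\x_j\}_{j\in S}$.

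\textbf{Step 1: reduction to coefficient vectors.} Because the $\x_j$ are independent, the map $\mathbb{R}^S\to\real^d$, $\gamma\mapsto\sum_j\gamma_j\x_j$, is injective. Hence \eqref{eq:subspace_condition} is equivalent to
\[
c^t\in\mathrm{span}\{c^{t+1}\mathbbm{1}_{S_h}\}_h,
\]
where $c^t=(c_j^t)_{j\in S}$. The level set $\{0\}$ of $\partial\loss_j/\partial z^L$ is null, and (Step 3) $z^L_j$ has an absolutely continuous law. So a.s.\ all $c_j^t$ and $c_j^{t+1}$ are nonzero.

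Under this event, the vectors $c^{t+1}\mathbbm{1}_{S_h}$ have disjoint supports. The nonzero ones are therefore linearly independent, and there is one for each nonempty $S_h$. This gives the last claim of the proposition: the number of non-null $\slice_h^{t+1}$ equals the number of nonempty sub-slices, which equals $m$ once isolation holds.

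Membership then holds iff there are scalars $\lambda_h$ with $c_j^t=\lambda_h c_j^{t+1}$ for all $j\in S_h$. Equivalently, the ratio $c_j^t/c_j^{t+1}$ is constant on every $S_h$.

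\textbf{Step 2: the easy direction.} If every $S_h$ has at most one element, the ratio condition is vacuous, so \eqref{eq:subspace_condition} holds deterministically. In that case $m$ is bounded by the number of nonempty sub-slices, as required.

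\textbf{Step 3: the converse, which is the main obstacle.} Suppose some $S_h$ contains two samples $j\neq l$. I must show that $\{c_j^t c_l^{t+1}=c_l^t c_j^{t+1}\}$ is a null event.

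Condition on the round-$(t+1)$ weights, which are independent of $\W^L$ at round $t$. This fixes a ratio $r$, and it suffices that $\Pr\big(f(z_j^t)=r\,f(z_l^t)\big)=0$, where $f=\partial\loss/\partial z^L$.

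Thanks to \eqref{eq:attack}, the downstream network is in a fixed linear region. Hence $z_j=\W^L\cdot\mathbf{v}_j+\beta$ is affine in $\W^L$, with a feature vector $\mathbf{v}_j$ determined by $\x_j$. If $\mathbf{v}_j$ and $\mathbf{v}_l$ are not collinear, the linear map $\W^L\mapsto(z_j,z_l)$ is onto $\real^2$, and it pushes the absolutely continuous law of $\W^L$ to an absolutely continuous law on $\real^2$. The set $\{(u,v):f(u)=rf(v)\}$ is Lebesgue-null by Fubini, because each $v$-section is a level set of $f$. So the event has probability zero.

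The delicate point is guaranteeing that $\mathbf{v}_j$ and $\mathbf{v}_l$ are non-collinear. I would derive it from $\x_j\neq\x_l$ (implied by independence) together with the attacker's freedom in the remaining first-layer neurons and the bias. If exact non-collinearity fails, I would fall back to showing that $(z_j,z_l)$ has an absolutely continuous law along a line with $z_j\neq z_l$ and apply the same Fubini argument on that line.

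Finally, a union bound over the finitely many pairs completes the converse.
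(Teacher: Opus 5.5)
Your structure matches the paper's proof. You prove isolation $\Rightarrow$ \eqref{eq:subspace_condition} directly, and you prove the converse by showing that two samples sharing a sub-slice have equal round-to-round coefficient ratios only on a null event. Your isolation reading is also what the paper's proof establishes. Your coefficient-space reduction and disjoint-support count are cleaner than the paper's version. One minor nit: Step~2 holds only almost surely, since a singleton $S_h=\{j\}$ with $c_j^{t+1}=0\neq c_j^t$ breaks membership.

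The genuine gap is in Step~3, at the point you flag, and your plan for it cannot work. In the attack every row of $\W^a$ equals $\mathbf w$, so the network sees an input only through $p=\mathbf w\x$. Two samples in the same slice share the active set $A$. Their first-layer activations are therefore $p_j\mathbf 1_A+(\b^a)_A$ and $p_l\mathbf 1_A+(\b^a)_A$, and their penultimate features are $\mathbf v_j=p_j\mathbf u+\mathbf e$ and $\mathbf v_l=p_l\mathbf u+\mathbf e$, with $\mathbf u,\mathbf e$ common to both. Distinctness or linear independence of $\x_j,\x_l$ says nothing about whether $p_j\neq p_l$, and there are no other first-layer neurons to exploit. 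Your fallback is also invalid. A line is itself null in $\real^2$, so nullity of $\{(u,v):f_j(u)=rf_l(v)\}$ says nothing about its trace on a line. Likewise, null level sets of $f_j$ do not control a one-variable relation $f_j(a_js+b)=rf_l(a_ls+b)$.

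In fact, the converse and the counting claim are false under the stated hypotheses. Take a slice containing exactly two linearly independent samples $\x_j,\x_l$ with $\mathbf w\x_j=\mathbf w\x_l$ and $y_j=y_l$. Then $z_j^L=z_l^L$ at every round for every draw of $\W^L$, so $c_j^t=c_l^t$ and $c_j^{t+1}=c_l^{t+1}$ identically. Hence \eqref{eq:subspace_condition} holds with probability one. Yet both samples lie in one sub-slice, since no hyperplane with normal $\mathbf w$ can separate them, and the single non-null slice vector undercounts the samples.

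What is missing is an extra genericity hypothesis with two parts:
\begin{itemize}
\item pairwise distinct projections $\mathbf w\x_j$ within the slice, which holds almost surely when $\mathbf w$ is drawn from an absolutely continuous law, as in Algorithm~\ref{algo:parallelattack};
\item either linear independence of $\mathbf u$ and $\mathbf e$, or an absolutely continuous draw of the output bias $b^{(L)}$ jointly with $\W^L$.
\end{itemize}
In the latter case the linear part of $(\W^L,b^{(L)})\mapsto(z_j,z_l)$ has rows $(\mathbf v_j,1)$ and $(\mathbf v_l,1)$. These are independent as soon as $\mathbf v_j-\mathbf v_l=(p_j-p_l)\mathbf u\neq\mathbf 0$, and your Fubini argument then goes through.

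The paper's proof has the same hole in a less visible form. It shows only that each $\beta_j^t$ is marginally absolutely continuous and concludes that the ratios differ almost surely. That does not follow, because all the $\beta_j^t$ are driven by the same $\W^L$. Your joint-law formulation is the right repair once the hypothesis above is added.
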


Proposition~\ref{prop:sample_isolation} tells the attacker whether the slice must be further explored with additional hyperplanes, or---when~\eqref{eq:subspace_condition} holds---whether all samples in the slice have been correctly isolated and can therefore be reconstructed using the procedure described below.

    
\paragraph{Reconstruct the input}
Once a sample has been successfully isolated, the attacker can recover its input features analytically. Without loss of generality, suppose that at communication round $t$ the server isolates an input $\x_k$ within the slice delimited by two consecutive bias values $\hat b_k^*$ and $\hat b_{k+1}^*$, tested at neurons $i$ and $i+1$.
From~\eqref{eq:hp_diff}, the corresponding slice vector is
\begin{equation}\label{eq:isolation}
    \slice_k = \frac{1}{B} \frac{\partial \loss_k}{\partial z^{L}} \x_k.
\end{equation}
Recovering $\x_k$ therefore reduces to estimating the corresponding scaling factor $\beta_k \coloneqq \frac{1}{B}\,\frac{\partial \loss_k}{\partial z^{L}}$. We can obtain $\beta_k$ by applying the same reasoning that leads to~\eqref{eq:hp_diff} to the gradients with respect to the bias parameters $\b^a$ of the attacked layer, which yields:
\begin{align}  \frac{\frac{\partial \loss}{\partial b_{i+1}^a}}{g_{i+1}} - \frac{\frac{\partial \loss}{\partial b_{i}^a}}{g_i} 
   &= \frac{1}{B} \sum_{j \in \activeset_{i+1}} \frac{\partial \loss_j}{\partial z^{L}} - \frac{1}{B} \sum_{j \in \activeset_{i}} \frac{\partial \loss_j}{\partial z^{L}} \nonumber \\
    &= \sum_{\substack{j \in \activeset_{i+1} \\ j \notin \activeset_{i}}}\frac{1}{B} \frac{\partial \loss_j}{\partial z^{L}}  \label{eq:rec_coeff}.
\end{align}
Hence for a slice containing a single sample, we can compute $\beta_k$ as $\frac{\frac{\partial \loss}{\partial b_{i+1}^a}}{g_{i+1}} - \frac{\frac{\partial \loss}{\partial b_{i}^a}}{g_i}$ and 
correctly reconstruct the private input vector as $\hat{\x}_k=\x_k=\slice_k^t / \beta_k$.

Finally, to reconstruct the target, we provide an optimization-based solution which is detailed in Appendix~\ref{app:reconstruction_target}.

\subsection{Final Algorithm}

To conclude, the full procedure is presented in Algorithm~\ref{algo:parallelattack}. 
As assumed in Section~\ref{sec:threat_model}, the attacker knows bounds on the input features, which induce bounds on the projections $\mathbf{w}\x$ and therefore define an initial search interval $[l,u]$ for the bias values.\\
 The procedure SetHyperplanes (Alg.~\ref{algo:set_hp_biases}) constructs the $N$ hyperplanes queried by the attacker, one per attacked neuron.
In the first communication round, these hyperplanes partitions $[u,v]$ into 
$N-1$ slices.
From the second round onward, 
each subinterval is explored by at least three hyperplanes: two positioned at the boundaries of the interval $[l_{\slice^{t-1}_i}, u_{\slice^{t-1}_i}]$, corresponding to slice $\slice^{t-1}_i$, and at least one additional hyperplane inside the interval to progressively reduce the search space (Algorithm~\ref{algo:set_hp_biases}). 
The boundary re-evaluation is necessary because, contrary to the assumptions made by \citet{diana2025cutting}, in our setting the model's parameters $\W^{a:L}$ are not frozen, but can be updated at every round (making our attack also less detectable). Consequently, the gradients for the same neuron $i$ in  \eqref{eq:weight_gradient} 
can vary across different rounds $t$. \\
The procedure \textsc{CheckIsolation} (Alg.~\ref{algo:check_isolation} in Appendix~\ref{app:algo}) uses Proposition~\ref{prop:sample_isolation} to certify when a slice contains exactly one sample; such slices are added to $\mathcal R$ and are ready for reconstruction.
It also detects empty slices via a null slice vector, in which case the corresponding interval requires no further exploration.



\begin{algorithm}[t]
\caption{VGIA}\label{algo:parallelattack}
\textbf{Input}: 
 $[l, u]$ initial search space of bias values, the set of attack rounds $\mathcal{T}=\{1,...,T\}$
\begin{algorithmic}[1]
    \STATE $\mathcal I \leftarrow ([l, u],)$ 
    \STATE Draw a vector $\mathbf{w}\in \real^d$ and set each row of $\W^a$ to $\mathbf{w}$.
    \FOR{$t \in \{1,...,T\} $ }
        \STATE $\mathcal{S} \leftarrow ()$;$\quad$ ($\b^a$, $\mathcal I$) $\leftarrow \mathrm{SetHyperplanes}(\mathcal{I})$\label{line:udpate_hyperplane}
        \STATE Server sends malicious parameters $\theta$ to the client
        \STATE Server receives gradient updates $\frac{\partial{\loss}}{\partial{\W^{a}}}$ and $\frac{\partial{\loss}}{\partial{\b^a}}$
        \STATE Server computes $\slice_i^{t}$ using \eqref{eq:hp_diff} and $\beta_i^t$ using \eqref{eq:rec_coeff}
        \STATE $\mathcal{S} \leftarrow (\mathcal{S},( (b_1^a,b_{2}^a, \slice_1^t, \beta_1^t),\dots, (b_{N-1}^a,b_{N}^a, \slice_{N-1}^t, \beta_{N-1}^t) ))$
        \STATE ($\mathcal{I}, \mathcal{R}) \leftarrow \mathrm{CheckIsolation}(\mathcal{I}, \mathcal{S})$
        \STATE Server uses $\mathcal R$ to reconstruct inputs and targets from \eqref{eq:hp_diff}, \eqref{eq:isolation}, \eqref{eq:rec_coeff}, and \eqref{eq:target_rec}
    \ENDFOR
\end{algorithmic}
\end{algorithm}

\begin{algorithm}[h]
\caption{SetHyperplanes}\label{algo:set_hp_biases}
\textbf{Input}: search intervals $\mathcal{I}=\{([l_i,u_i],\slice_i) \mid i=1, \dots, |\mathcal {I}|\}$

\begin{algorithmic}[1]
    \STATE sort $\mathcal{I}$ so that $l_1 < l_2 < \dots < l_{|\mathcal I|}$
    \STATE $M\leftarrow \min(|\mathcal{I}|, \lfloor N / 3 \rfloor);\quad r \leftarrow N \bmod M;\quad j \leftarrow 1$
    \FOR {$k=1,\dots,M$}
        \STATE $q \leftarrow \lfloor N/M \rfloor + \mathbbm{1}_{k\leq r}$
        
        \IF{$k > 1$ \AND $l_{\slice_k^t} = u_{\slice_{k-1}^t}$} 
            \STATE $\b^a_{j:j+q-1} \leftarrow \{ l_{k} + i \frac{u_{k} - l_{k}}{q} | i = 1, \dots, q \}$
            \STATE $j \leftarrow j+q-1$
        \ELSE
            \STATE $\b^a_{j:j+q-1} \leftarrow \{ l_{k} + i \frac{u_{k} - l_{k}}{q - 1} | i = 0, \dots, q-1 \}$
            \STATE $j \leftarrow j+q-1$
        \ENDIF
    \ENDFOR
    \STATE Return $\b^a$, $\mathcal I$
\end{algorithmic}
\end{algorithm}

\section{Experimental Evaluation}
\paragraph{Setup} We evaluate the attacks on four benchmarks:  three tabular datasets---ACS Income~\citep{income} and  King County Housing~\citep{datasetlink} for regression and Human Activity Recognition Using Smartphones (HARUS)  for classification--- and one image dataset CIFAR10 \citep{cifar10}. 
Dataset details are provided in Appendix~\ref{app:dataset}.
Each client trains a three-layer fully connected neural network (FC-NN) with $N=1000$ neurons in the first hidden layer and 100 neurons in the second layer.\footnote{FC-NNs are commonly used for tabular data and have also been considered in prior GIAs on tabular data~\citep{tableak}.}
The clients update their models using FedSGD with full-batch updates. \\
Due to space constraints, we report results on ACS Income in the main text and defer additional results, including those for FedAvg with multiple local updates, to Appendix~\ref{app:additional_results}.
\paragraph{Baselines} 
We compare our method (VGIA) with the CTP attack~\citep{diana2025cutting}, the state-of-the-art analytical attack under malicious threat model (Sec.~\ref{sec:threat_model}). 
As detailed in Sec.~\ref{subsec:preliminaries}, CTP has a user-chosen parameter $\varepsilon$ to stop the search.
Let $\mathcal D$ be the attacked client local dataset and $\varepsilon_{\mathbf w}$ be the minimum absolute difference between the projections of any two distinct samples in $\mathcal D$ along direction $\mathbf w$, i.e., $\varepsilon_{\mathbf w} = \min_{\mathbf x_1 \neq \mathbf x_2 \in \mathcal D} \left| \mathbf w^\top (\mathbf x_1 - \mathbf x_2) \right|$. We test CTP under three settings of $\varepsilon$: $\varepsilon <\varepsilon_{\mathbf w}$, $\varepsilon = \varepsilon_{\mathbf w}$, and $\varepsilon >\varepsilon_{\mathbf w}$, denoted by CTP$_<$, CTP$_=$, and CTP$_>$, respectively.\\
Each method is tested over three random seeds. 
For each seed, we sample (i) a random normal vector $\mathbf w$ defining the direction of the queried hyperplanes and (ii) a target client's dataset $\mathcal D$  drawn from the full dataset. To ensure a fair comparison, we keep $\mathbf w$ and $\mathcal D$  the same for both VGIA and CTP. The detailed attack configurations are presented in Appendix~\ref{app:exp_setup}.


\begin{figure*}[tb]
    \centering
    \subfloat[VGIA V.S. CTP$_{=}$\label{fig:income_good_epsilon}]{%
        \includegraphics[width=0.3\linewidth]{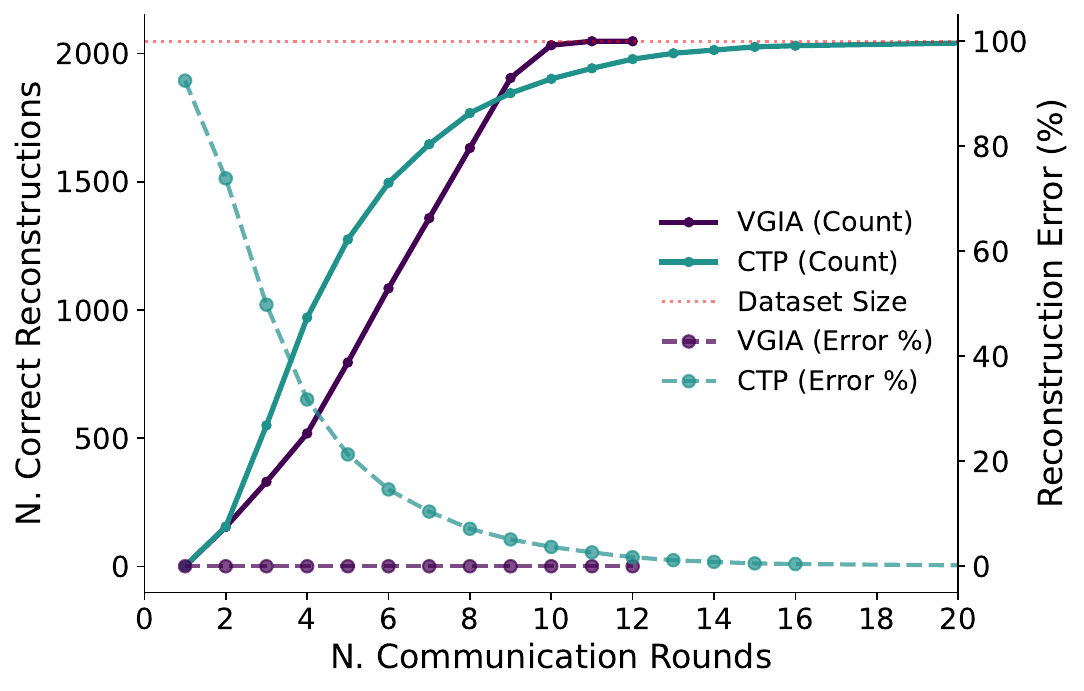}
    }\hspace{0.025\linewidth}
    \subfloat[VGIA V.S. CTP$_{>}$ \label{fig:income_bad_epsilon}]{%
        \includegraphics[width=0.3\linewidth]{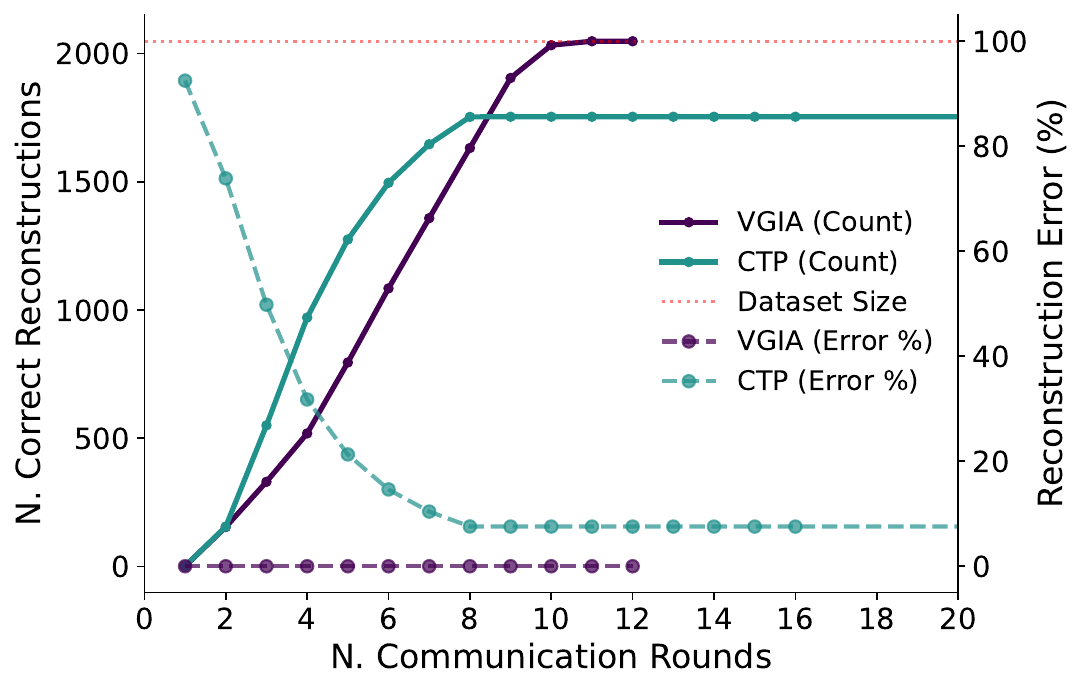}
    }\hspace{0.025\linewidth}
    \subfloat[VGIA V.S. CTP$_{<}$ \label{fig:income_efficiency}]{%
        \includegraphics[width=0.275\linewidth]{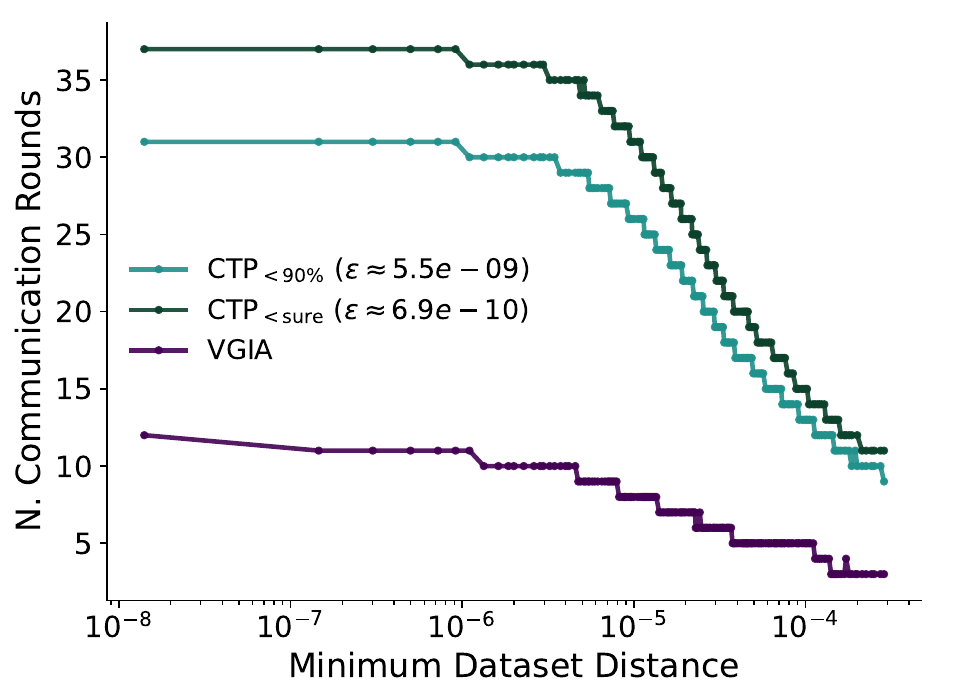}
    }


    \caption{
Attack performance of VGIA and the CTP baselines when the target client trains a three-layer fully connected neural network on a random subset of ACS Income dataset (size 2048) using full-batch FedSGD. (a,b) Number of correct samples and reconstruction error under varying attack-round budgets. (c) Number of attack rounds required for the adversary to identify all correct samples (i.e., achieve verifiability) under different target dataset with varying $\varepsilon_{\mathbf w}$. CTP$_{=}$ presents a baseline with knowledge of $\varepsilon_{\mathbf w} = $ (the minimum absolute distance between the projections of any two samples along direction $\mathbf{w}$). 
CTP$_{>}$ presents a optimistic setup on $\varepsilon$, s.t., $\varepsilon=10^{-4} > \varepsilon_{\mathbf w}$. 
CTP$_{<}$ presents a pessimistic setup on $\varepsilon$, s.t., $\varepsilon < \varepsilon_{\mathbf w}$.
  }
    \label{fig:main_fig}
\end{figure*}

\paragraph{Metrics}
In VGIA and CTP attacks, the adversary can obtain reconstructions that are either correct samples or linear combinations of true samples (spurious reconstructions).\footnote{VGIA may still produce occasional incorrect reconstructions due to numerical errors. For both attacks, we deem a recovered input correct if its $\ell_2$ distance to the corresponding ground-truth sample is below $10^{-9}$.
} We report first two ground-truth-based metrics: the number of correct samples and the proportion of spurious reconstructions (termed reconstruction error) under a given number of attack rounds (called attack budget). We also report the number of attack rounds required for the adversary to identify all correct samples in $\mathcal D$. Remind that, for CTP$_<$ and CTP$_=$, such identification is feasible, once all search slices reach the $\varepsilon$ distance. For VGIA, the adversary can identify the true sample once Prop.~\ref{prop:sample_isolation} holds.


\subsection{Experimental Results}\label{sec:exp_results}

Figure~\ref{fig:income_good_epsilon}  and~\ref{fig:income_bad_epsilon} report the performance of VGIA and CTP (under ground-truth-based metrics) on the target datasets drawn from the ACS Income dataset (size 2048) under various attack-round budgets.

In Fig.~\ref{fig:income_good_epsilon}, we consider the most favorable setting from the baseline, CTP$_=$, where $\varepsilon=\varepsilon_{\mathbf w}$. VGIA correctly recovers all 2048 samples within 11 attack rounds, whereas CTP requires $20$ rounds. Moreover, once the attack budget exceeds 11 rounds, VGIA can verify all recovered samples at round 12, eliminating the need for further attacks.  
Although CTP recovers more correct samples than VGIA for attack budgets between 2 and 8 rounds, the attacker cannot distinguish true samples from spurious reconstructions. For instance, after 4 attack rounds, CTP recovers nearly 1000 correct samples but incurs a reconstruction error of 30\%, corresponding to approximately 430 spurious samples. For  attack budget equal to~8, CTP recovers 1750 correct samples but also has 120 spurious samples. In contrast, VGIA produces no spurious reconstructions throughout this experiment.

In Fig.~\ref{fig:income_bad_epsilon}, we consider a baseline scenario, CTP$_>$, with $\varepsilon = 10^{-4} > \varepsilon_{\mathbf w}$, corresponding to an overestimation of the true threshold $\varepsilon_{\mathbf w}$. In this setting, the baseline fails to recover all samples. 
Moreover, even for large attack budget, approximately 8\% of the reconstructions are spurious and cannot be filtered out by the attacker. This phenomenon is more evident for KCH dataset where samples are closer to each other (see Fig.~\ref{fig:king_county_bad_epsilon} in Appendix), VGIA still recovers all samples within 11 rounds, whereas CTP$_>$ recovers at most 1000 samples and produces approximately 420 spurious reconstructions.

\paragraph{Verifiability study} In Fig.~\ref{fig:income_efficiency}, 
we study attacks' verifiability across different target client datasets sampled from the full dataset $\mathcal S$, for which the ground-truth
$\varepsilon_\mathbf w$ ranges from $\varepsilon_{\min}=6.9\times10^{-10}$ to $10^{-3}$.
 Note that CTP can successfully verify all the true samples only when $\varepsilon \leq \varepsilon_{\mathbf w}$. In practice, the attacker does not know $\varepsilon_{\mathbf w}$ which depends on the target dataset, but, in this experiment,  
we assume the CTP attacker knows the CDF of \(\varepsilon_{\mathbf w}\) and we consider two choices:
(i) \(\varepsilon=\varepsilon_{\min}\), which guarantees verifiability for any dataset drawn from $\mathcal S$ (CTP$_{<\text{sure}}$); and
(ii) $\varepsilon$ set to the $(0.1)$-quantile of the CDF, which yields a $90\%$ probability of verifiability (CTP$_{<90\%}$).
From the figure, we observe that both CTP$_{<\text{sure}}$ and CTP$_{<90\%}$ require at least twice the attack budget of VGIA to achieve verification across all sampled target datasets. The relative speedup of VGIA remains nearly constant across all values of $\varepsilon_{\mathbf w}$.
The same observation maintains in HARUS and KCH dataset (see Figs.~~\ref{fig:harus_efficiency} and~\ref{fig:kch_efficiency} in Appendix).

\section{Conclusion and Future Works}
In this work, we introduce a \emph{Verifiable Gradient Inversion Attack}, a novel analytical attack in federated learning that provides explicit certificates of correctness for reconstructed samples without restrictive assumptions.\\
While fully connected networks are widely used in tabular learning, extending the proposed attack to more expressive architectures remains an important direction for future work. For example, recent studies show that targeted modifications to Transformer models can induce behaviors enabling direct input recovery~\citep{fowl2023decepticons}. 
Finally, a systematic evaluation of attack performance under privacy-enhancing mechanisms remains largely unexplored and constitutes a compelling avenue for future research.

\section*{Impact Statement}
This work discusses an attack that highlights issues of user privacy in federated learning by reconstructing client data from model updates. While such attacks could be misused, the goal of this work is to raise awareness of these privacy threats, helping the community to understand the limitations of federated learning and to stipulate the development and usage of effective defense strategies to preserve user privacy.


\bibliography{ref}
\bibliographystyle{icml2026}

\newpage
\appendix
\onecolumn

\section{Proof of Proposition \ref{prop:sample_isolation}}
\label{app:proof_prop1}

\begin{proof}
Let $X = \{\x_1, \dots, \x_m\} \subset \mathbb{R}^d$ be the set of unknown private samples located within the two hyperplanes  $\mathbf w \x + \hat b_k^t=0$ and $\mathbf w \x + \hat b_{k+1}^t=0$. We assume these samples are linearly independent.

Recall from Equation \eqref{eq:hp_diff} that the server observes a linear combination of the isolated samples. At attack round $t-1$, the slice vector $\mathbf{s}_k^{t}$ is given by:
\begin{equation}
    \mathbf{s}_k^{t} = \sum_{j=1}^m \beta_j^{t} \x_j,
\end{equation}
where $\beta_j^{t} = \frac{1}{B}\frac{\partial \loss_j}{\partial z^{L}} \in \mathbb{R}$ are scalar coefficients derived from the prediction error and model weights at round $t-1$ \eqref{eq:rec_coeff}.
 
 At the next attack round, the attacker will test $q$ bias values in the interval $[\hat b_k^{t}, \hat b_{k+1}^t]$, that is 
$\hat b_{k}^{t+1} (= \hat b_{k}^{t}) < \hat b_{k+1}^{t+1} < \dots \hat b_{k+q-1}^{t+1} (=\hat b_{k+1}^{t})$, and compute the corresponding slice vectors $\slice_k^{t+1}, \slice_{k+1}^{t+1}, \dots \slice_{k+q-1}^{t+1}$.

Let $X_r \subset X$ denote the subset of samples falling into the $r$-th sub-interval $[\hat b_r^{t}, \hat b_{r+1}^t]$. Thus, we have:
\begin{equation}
    \slice^{t+1}_r = \sum_{\x_j \in X_r} \beta_j^{t+1} \x_j.
\end{equation}

Note that $\bigcup_{r=1}^n X_r = X$, and $X_r \cap X_q = \emptyset$ for $r \neq q$.

\paragraph{ Isolation $\implies$ Equal subspace dimension}
Assume the isolation is successful. This implies that each non-empty sub-slice $\slice^t_r$ isolates exactly one sample. Without loss of generality, assume $n=m$ and each $X_r = \{\x_r\}$. Then, according to Equation \eqref{eq:isolation} the new slice vectors become scaled versions of the original inputs:
\begin{equation}
    \slice^{t+1}_r = \beta_r^{t+1} \x_r.
\end{equation}
As a consequence, the subspace spanned by the new slice vectors is exactly the subspace spanned by the inputs, i.e.,:
\begin{equation}
\slice_k^t \in \mathrm{span}\left(\slice_k^{t+1}, \slice_{k+1}^{t+1}, \dots \slice_{k+q-1}^{t+1}\right).\nonumber 
\end{equation}

\paragraph{Equal subspace dimension $\implies$ Isolation with probability 1}
We proceed by contradiction. Assume that the inclusion of the previous round's slice vector $\slice_k^{t}$ does not increase the dimensionality of the subspace spanned by the current round's slices  $\slice_k^{t+1}, \slice_{k+1}^{t+1}, \dots \slice_{k+q-1}^{t+1}$, i.e.,
\begin{equation}
\slice_k^t \in \mathrm{span}\left(\slice_k^{t+1}, \slice_{k+1}^{t+1}, \dots \slice_{k+q-1}^{t+1}\right).\nonumber 
\end{equation}
but there exists at least a slice $\slice_r^{t+1}$ that contains $m \geq2$ samples (non-isolation). 
Since we assumed non-isolation, there exists at least a slice $\slice_r^{t+1}$ that is a collision of at least two samples $\x_1$ and $\x_2$. Without loss of generality, let $\slice_1^{t+1}$ contain samples $\x_1$ and $\x_2$. Then:

\begin{equation}
    \slice_1^{t+1} = \beta_1^{t+1}\x_1+\beta_2^{t+1}\x_2 
\end{equation}

For $\slice_k^{t}$ to belong to $\mathrm{span}(\slice_k^{t+1}, \slice_{k+1}^{t+1}, \dots \slice_{k+q-1}^{t+1})$ and not increase the subspace dimensionality, it must be expressible as a linear combination of the vectors $\slice_k^{t+1}, \slice_{k+1}^{t+1}, \dots \slice_{k+q-1}^{t+1}$. Focusing on the subspace spanned by $\x_1$ and $\x_2$, the condition requires that the ratio of coefficients at different rounds is the same for both  $\x_1$ and $\x_2$:
\begin{equation}
    \frac{\beta_1^{t+1}}{\beta_1^{t}}=\frac{\beta_2^{t+1}}{\beta_2^{t}},
\end{equation}
where, according to \eqref{eq:rec_coeff}, $\beta_j^t = \frac{1}{B}\frac{\partial \loss_j^t}{\partial z^{(L), t}_j}$ and $z^{(L), t}_j$ is the output prediction of model at around $t-1$ on input sample $\x_j$. 

As we assume $\W^L$ is drawn at each attack round from an absolutely continuous distribution with positive values, it follows that $z^{(L),t}_j$, being a linear combination of these weights with nonnegative coefficients that are not all zero, is itself absolutely continuously distributed.

Since we assume that  all level sets of $\frac{\partial \loss_j }{\partial z^L}$ are Lebesgue-null, then $\beta_j^t$ is also absolutely continuously distributed. Thus, it holds with probability 1 that


\begin{equation}
    \frac{\beta_1^{t+1}}{\beta_1^{t}}\neq\frac{\beta_2^{t+1}}{\beta_2^{t}}
\end{equation}
This inequality implies that $\slice_k^{t}$ contains a component linearly independent of $\slice_1^k$, forcing:
\begin{equation}
\slice_k^t \not \in \mathrm{span}\left(\slice_k^{t+1}, \slice_{k+1}^{t+1}, \dots \slice_{k+q-1}^{t+1}\right).\nonumber 
\end{equation}
This contradicts the initial assumption of rank equality. Thus, the subspace dimension equality condition implies strict isolation with probability 1.
\end{proof}

\section{Technical details}
\subsection{Extension to Multiclass Classification}\label{app:class_ext}

While the proposed attack is naturally formulated for tasks with scalar outputs, its core mechanism can be seamlessly extended to multiclass classification.
In standard classification settings, the model outputs a vector $\z^{(L)} \in \mathbb{R}^C$, and the loss is typically the cross-entropy between $\z^{(L)}$ and a one-hot target vector $\mathbf{y} \in \real^C$.
A naive application of the attack fails because the gradient signal backpropagated to the first layer is a linear combination of error terms from all $C$ classes, weighted by the connections of the second layer. Specifically, the gradient with respect to a bias $b_i^a$ becomes:

\begin{equation}
    \frac{\partial \mathcal{L}}{\partial b_i^a} = \sum_{c=1}^{C} \frac{\partial \mathcal{L}}{\partial z_c^{(L)}} g_{c,i} \mathbbm{1}_{z_i^{(1)}}.
\end{equation}

Since each neuron $i$ is connected to the output classes with different weights $\g_i$, the error term is no longer a sample-dependent scalar (as in the regression case, where it is $2(z^{(L)}-y)$), but a neuron-dependent mixture. This prevents the cancellation of the error term in Equation \eqref{eq:hp_diff}, which is necessary for sample isolation.

To overcome this limitation, inspired by \citet{fowl2022robbing}, the attacker modifies the classification weights $\W^{(L)}$ to force the classification network to mathematically behave like a regression model during the attack phase.
The server constructs the classification layer weight matrix $\W^{(L)}$ as a rank-1 matrix, defined by the outer product of a class projection vector $\mathbf{c} \in \mathbb{R}^C$ and a mixing vector $\mathbf{u} \in \mathbb{R}^N$:
\begin{equation}
    \W^{(L)} = \mathbf{c} \cdot \mathbf{u}^T.
\end{equation}
By substituting this structure into the gradient equation, the term $g_{k,i}$ becomes $c_k \cdot u_i$. Consequently, the gradient factorizes as:
\begin{equation}
    \frac{\partial \mathcal{L}}{\partial b_i^a} = \left( \sum_{k=1}^{C} \frac{\partial \mathcal{L}}{\partial z_k^{(L)}} c_k \right) u_i \mathbbm{1}_{z_i^{a} > 0}.
\end{equation}
The term in the parentheses is a scalar value: the projection of the multidimensional error vector onto the fixed direction~$\mathbf{c}$, which acts as a universal ``pseudo-residual'' common to all neurons.
The vector $\mathbf{u}$ then serves the same role as the weight vector in the scalar regression case.
Thus, by setting the malicious weights $\W^{(L)}$ to this rank-1 form, the server can apply the exact same reconstruction process derived in Section \ref{sec:attack_method}, using $u_i$ as the divisor in place of $g_i$ in Equations \eqref{eq:hp_diff} and \eqref{eq:rec_coeff}.
This transformation allows for the exact recovery of inputs in multiclass classification tasks without requiring any changes to the search or isolation algorithms. Additionally, target recovery can be performed by directly solving \eqref{eq:target_rec}, evaluating each candidate label $\hat{y}_k$ and selecting the one that minimizes the reconstruction error.

\subsection{Reconstruction of targets}
\label{app:reconstruction_target}
The final phase of the attack focuses on recovering the private target value ${y}_k$. While target inference has been widely studied in classification tasks \citep{yin_gi, idlg, kariyappa23a_cocktail, e2egi}, comparatively little work has addressed continuous regression targets, where the lack of discrete labels complicates both inference and verification. Given an exact reconstruction of the input ${\x}_k$, target recovery can be cast as univariate optimization problem. 
Specifically, the attacker seeks the target value $\hat{y}_k$ that minimizes the  distance  between the ground-truth partial derivative and  a virtual gradient locally computed by the server:
\begin{equation}\label{eq:target_rec}
    \hat{y}_k \in  \argmin_{y}\left(\frac{\partial \loss_{k}}{\partial {b}_i^a} - \frac{\partial {\loss}}{\partial {b^a_i}}({\x}_{k}, y)\right)^2.
\end{equation}
We note that $\frac{\partial \loss_{k}}{\partial {b}_i^a} = \frac{\partial \loss_{k}}{\partial z^L}\frac{\partial z^L}{\partial b_i^a}$. The attacker can compute $\frac{\partial z^L}{\partial b_i^a}$, because it knows the model parameters and the input $\x_k$, It can then obtain $\frac{\partial \loss_{k}}{\partial z^L}$ from~\eqref{eq:rec_coeff}. The second term is a function of $y$ the attacker can compute because it knows the model parameters and $\x_k$.
If the derivative of the loss with respect to the network output $\partial \loss/\partial z^L$ is monotone, as it is the case for the squared loss, then the problem in~\eqref{eq:target_rec} admits a unique solution.

\subsection{Algorithm Details}
\label{app:algo}
Here, we present our subroutine CheckIsolation in Algo.~\ref{algo:check_isolation}.

\begin{algorithm}[h]
    \caption{CheckIsolation}\label{algo:check_isolation}
    \textbf{Input}: set of search intervals $\mathcal{I}$, and set of strips $\mathcal{S} = \{\mathcal{S}_1, \dots \}$
    
    \begin{algorithmic}[1]
    \STATE$\mathcal{I_{\mathrm{new}} \leftarrow (); \quad \mathcal{R}\leftarrow ()}$
    \STATE $i \leftarrow 1; \quad  j \leftarrow 1$
    \STATE $M \leftarrow|\mathcal I|$ 
    \WHILE{$ j < M$}
    \STATE $(([l,u ], \slice'),\mathcal I) \leftarrow (\mathcal I_1, (\mathcal I_2, \dots, \mathcal I_{|\mathcal I|}))$
    \STATE $\mathcal{R}_{\textrm{temp}} \leftarrow (); \quad \mathcal{I}_{\textrm{temp}} \leftarrow (); \quad
    \mathcal{V}_{\textrm{temp}} \leftarrow ()$
        \STATE $(b',b'', \slice, \beta)\leftarrow \mathcal S_1$        \WHILE{$b'' \in [l,u] $}

            \IF{$\slice\neq \mathbf{0}$}\label{line:search_cond}
                \STATE $\mathcal{V}_{\textrm{temp}} \leftarrow (\mathcal{V}_{\textrm{temp}}, \slice)$
                \STATE $\mathcal{R}_{\textrm{temp}} \leftarrow (\mathcal{R}_{\textrm{temp}}, b', b'', \slice, \beta)$
                \STATE $\mathcal{I}_{\textrm{temp}} \leftarrow (\mathcal{I}_{\textrm{temp}},([b', b''], \slice))$
           \STATE $\mathcal S \leftarrow  (\mathcal S_2, \dots, \mathcal S_{|\mathcal S|})$ 
            \ENDIF 
                    \STATE $(b',b'', \slice, \beta)\leftarrow \mathcal S_1$
        \ENDWHILE
        \IF{$|\mathcal{V}_{\textrm{temp}}| > 0$}
            \STATE $r \leftarrow \dim\left(\mathrm{span}\left(\mathcal{V}
            _{\textrm{temp}}\right)\right)$
            \STATE $r_{\mathrm{aug}} \leftarrow \dim\left(\mathrm{span}\left(\mathcal{V}
            _{\textrm{temp}},\slice'\right)\right)$
            \IF{$r = r_{\mathrm{aug}}$}
               \STATE $\mathcal{R} \leftarrow \mathcal{R} \cup \mathcal{R}_{\textrm{temp}}$ 
            \ELSE
                \STATE $\mathcal{I_{\mathrm{new}}} \leftarrow (\mathcal{I_{\mathrm{new}}},\mathcal{I}_{\textrm{temp}})$
            \ENDIF
        \ENDIF
        \STATE  $j \leftarrow j+1$
    \ENDWHILE
    \STATE Return $(\mathcal I,\mathcal{I_{\mathrm{new}}}), \mathcal{R}$
    \end{algorithmic}
\end{algorithm}

\section{Additional Experimental Details}
\subsection{Datasets}\label{app:dataset}
For regression tasks, we evaluate our attack on the ACS Income dataset \citep{income} and the King County Housing dataset \citep{datasetlink}. The ACS Income dataset consists of census records from all 50 U.S. states and Puerto Rico and includes a diverse set of demographic attributes, with the goal of predicting individual income. In our experiments, we restrict the data to samples from the state of California. The King County Housing dataset contains information on residential property transactions in King County, Washington, and comprises 18 input features, with house price prediction as the target task. For both datasets, input features are rescaled to the $[0,1]$ interval, while target variables are standardized.

For classification tasks, we evaluate our approach on two datasets: the Human Activity Recognition Using Smartphones (HARUS) dataset \citep{harus} and CIFAR10 \citep{cifar10}. The HARUS dataset consists of motion signals collected from 30 subjects using embedded smartphone sensors and includes 561 input features and 6 activity labels corresponding to distinct physical activities. All input features are scaled to the $[-1,1]$ range. CIFAR10 is a benchmark image classification dataset comprising 60{,}000 color images of size $32 \times 32$ across 10 object classes; for this dataset, pixel values are rescaled to the $[0,1]$ interval.

\subsection{Attacks Configuration}\label{app:exp_setup}
To limit the impact of numerical errors, all the results for both VGIA and CTP are computed in double precision.
\paragraph{Income}
For experiments on the Income dataset, in VGIA the attacker initializes each row $\mathbf{W}_i^a$ of the attack layer with identical values drawn from a normal distribution, $\mathbf{W}^a \sim \mathcal{N}(0, 10^{-4})$. For the remaining layers, the attacker enforces positive weights and biases by sampling all parameters independently from the uniform distribution $U[0.01, 0.02]$. Beyond ensuring activation of these layers, no additional constraints are imposed on the parameter distributions.

For the CTP attack, the server samples each attack-layer weight row from a normal distribution, $\mathbf{W}_i^a \sim \mathcal{N}(0, 10^{-2})$, and initializes $\mathbf{W}^{(2)}$ and $\mathbf{W}^{(L)}$ sampling parameter values from the same distribution. Biases of the intermediate layers are independently drawn from $\mathcal{N}(0, 10^{-8})$, while the final-layer bias is set to $\b^{(L)} = 10^{30}$.

\paragraph{HARUS} 
For the HARUS dataset, the VGIA attacker constructs the attack layer by assigning all rows $\mathbf{W}_i^a$ the same initialization, with entries drawn from a Gaussian distribution $\mathcal{N}(0, 10^{-6})$. All remaining weights and bias terms are independently sampled from the uniform distribution $U[1, 2]$. 

For the CTP attack, we follow  the original implementation. The attack direction is sampled from $\mathcal{N}(0, 10^{-4})$, intermediate-layer weights are initialized using the same distribution as the input weights, and the final classification bias vector terms are fixed to $\b^{(L)} = 10^{30}$.

\paragraph{King County Housing}
In the King County Housing experiments, VGIA follows the same parameter initialization scheme adopted for the Income dataset. Specifically, the attack layer direction is sampled from a normal distribution, $\mathbf{W}_i^a \sim \mathcal{N}(0, 10^{-4})$, whereas all other weights and bias parameters are independently drawn from the uniform distribution $U[0.01, 0.02]$.

For the baseline configuration, all attack-layer neurons are initialized with identical weights $\mathbf{W}_i^a \sim \mathcal{N}(0, 10^{-2})$. The weights of the subsequent layer, $\mathbf{W}^{(2)}$, are sampled from the same distribution. Biases in the intermediate layers are drawn independently from $\mathcal{N}(0, 10^{-8})$. Finally, the output-layer bias is fixed to $\b^{(L)} = 10^{30}$, while the corresponding output weights are initialized from $\mathcal{N}(0, 10^{-8})$.

\paragraph{CIFAR10}
For experiments on CIFAR10, both attack methods adopt the same parameter initialization strategy used for the HARUS dataset. In the VGIA setting, all rows of the attack-layer weight matrix are initialized identically, with values drawn from $\mathcal{N}(0, 10^{-6})$, while all remaining weights and bias terms are constrained to be positive and independently sampled from the uniform distribution $U[1, 2]$. 

Also for the CTP attack, we follow the original configuration: the attack direction is sampled from $\mathcal{N}(0, 10^{-4})$, intermediate-layer weights are drawn from the same distribution as the input-layer weights, and the final-layer bias is fixed to $\b^{(L)} = 10^{30}$.

\section{Additional Experimental Results}\label{app:additional_results}
\subsection{Income}
\label{app:fedavg}

We conduct additional experiments on the ACS Income dataset to evaluate the accuracy of our attack under FedAvg. The inclusion of multiple local steps introduces local model drift, which complicates the reconstruction process as samples are no longer strictly isolated between parallel hyperplanes. Consequently, we devised a more robust mechanism for span verification to facilitate our search process. Specifically, the attacker employs Gram-Schmidt orthogonalization to verify the condition in Proposition~\ref{prop:sample_isolation}. By projecting out shared components, the presence of a sample between two hyperplanes yields a signal magnitude clearly distinguishable from the noise induced by local model updates. In our evaluation, we initialize all the neurons of the attack layer as identical values $\W^a_i \sim \mathcal{N}(0, 10^{-4})$, while all other model parameters are drawn from $U[10^{-3}, 2 \cdot 10^{-3}]$ . For the training procedure, we set the learning rate to
$10^{-4}$.

The results reported in Table~\ref{tab:minibatch} confirm that, as expected, increasing the amount of local computation degrades the attack’s accuracy, as isolating individual samples becomes more challenging. Nevertheless, the attack continues to achieve perfect reconstruction for at least 25\% of the dataset and exceeds 50\% recovery in most experimental settings. As discussed earlier, local updates may introduce false positives, since parameter drift–induced signals can be misinterpreted as the presence of samples within a given slice. Importantly, this phenomenon does not substantially impair the verification process: across most configurations, false positives account for less than 20\% of the recovered samples.

Finally, we observe that CTP is ineffective in regression scenarios under FedAvg. As noted by \citet[Appendix~F.1]{diana2025cutting}, their method depends on classification-specific weight manipulations to mitigate hyperplane drift; in the absence of these adjustments, local updates render their search strategy ineffective.
\begin{table}[ht]
    \centering
    \small 
    \caption{Evaluation of the VGIA attack under FedAvg framework. The client possesses $n=1024$ samples in its local dataset. The attack layer of the model has $N=1000$ neurons. All metrics are reported after 30 attack rounds. FP is short for false positives.}
    \label{tab:minibatch}
    
    \begin{tabular}{lccc}
        \toprule
        \textbf{Local Epochs} & \multicolumn{3}{c}{\textbf{Batch Size}} \\
        \cmidrule(lr){2-4}
         & \textbf{128} & \textbf{256} & \textbf{512} \\
         & \scriptsize (N. Correct Rec. / FP) & \scriptsize (N. Correct Rec. / FP) & \scriptsize (N. Correct Rec. / FP) \\
        \midrule
        1 & 560.0$\pm$39.6 / 17.5$\pm$2.1 & 802.5$\pm$44.5 / 47.0$\pm$1.4 & 803.7$\pm$20.8 / 182.0$\pm$114.8 \\
        \addlinespace[0.5em]
        2 & 418.5$\pm$38.9 / 19.5$\pm$2.1 & 646.5$\pm$24.7 / 57.0$\pm$2.8 & 863.3$\pm$31.2 / 116.0$\pm$9.8 \\
        \addlinespace[0.5em]
        3 & 324.0$\pm$5.7 / 28.5$\pm$2.1 & 520.5$\pm$0.7 / 67.5$\pm$7.8 & 761.3$\pm$3.2 / 136.0$\pm$7.0 \\
        \addlinespace[0.5em]
        5 & 258.5$\pm$37.5 / 26.5$\pm$0.7 & 403.0$\pm$19.8 / 67.5$\pm$9.2 & 547.0$\pm$39.9 / 137.7$\pm$17.0 \\
        \bottomrule
    \end{tabular}
\end{table}

\subsection{HARUS}
\begin{figure*}[tb]
    \centering 
    \subfloat[VGIA V.S. CTP$_{=}$\label{fig:harus_good_epsilon}]{%
        \includegraphics[width=0.3\linewidth]{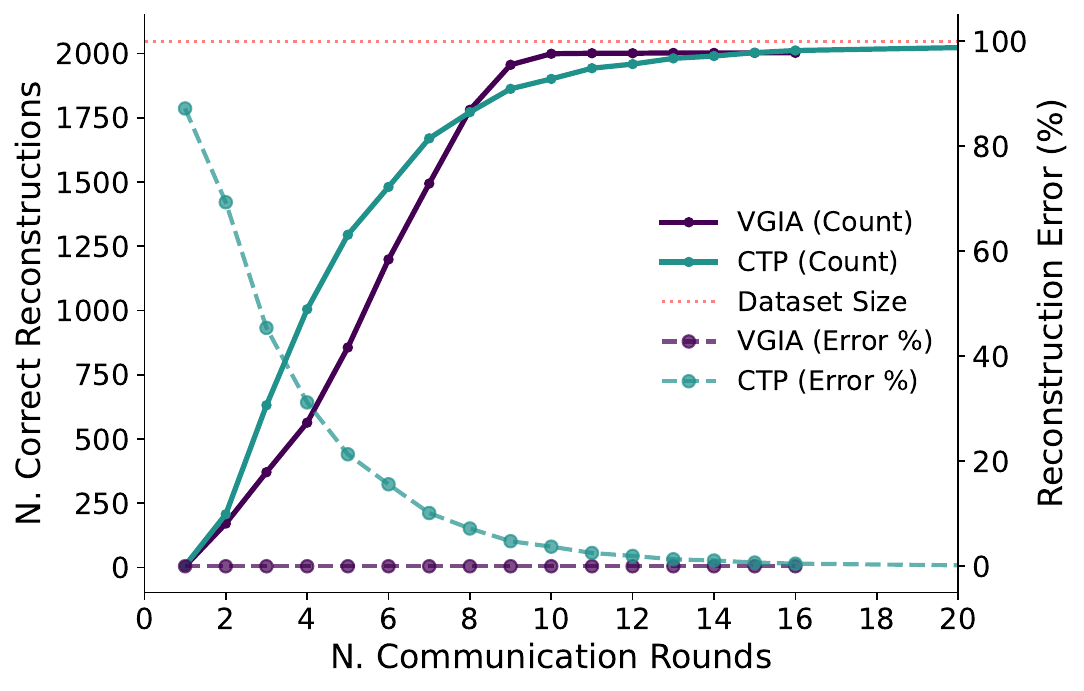}
    }\hspace{0.025\linewidth}
    \subfloat[VGIA V.S. CTP$_{>}$ \label{fig:harus_bad_epsilon}]{%
        \includegraphics[width=0.3\linewidth]{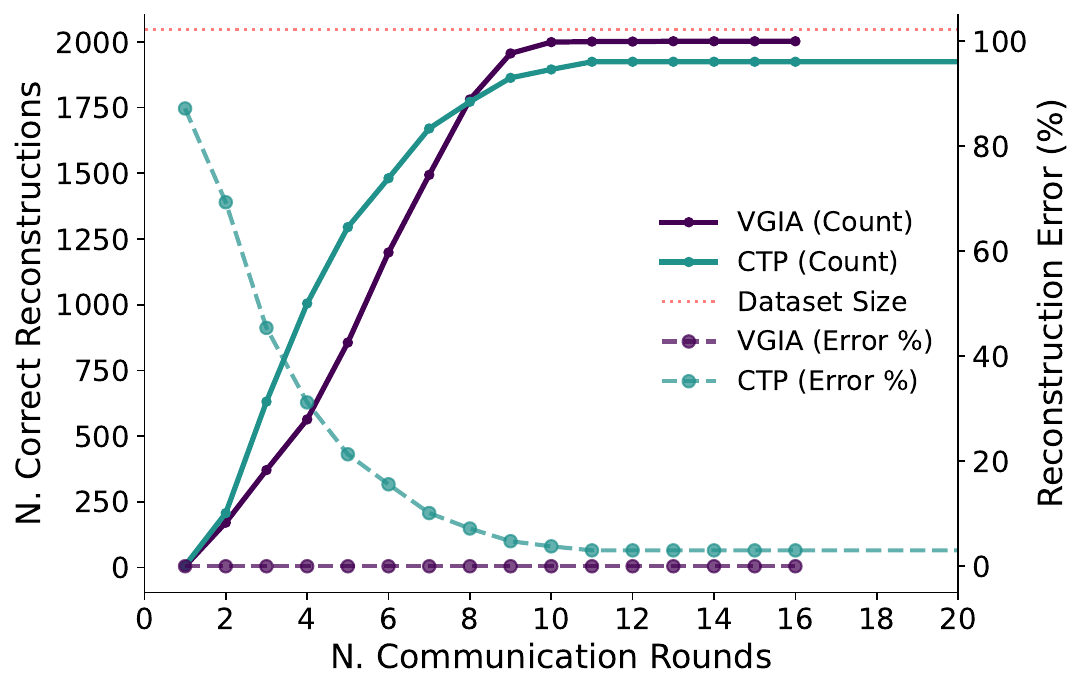}
    }\hspace{0.025\linewidth}
    \subfloat[VGIA V.S. CTP$_{<}$ \label{fig:harus_efficiency}]{%
        \includegraphics[width=0.275\linewidth]{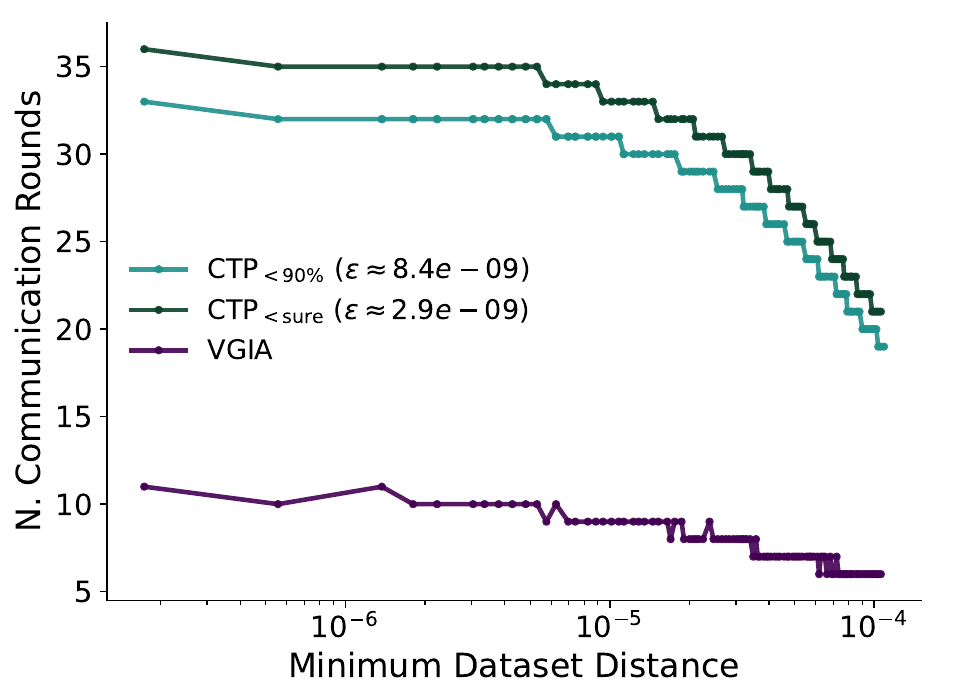}
    }
    \caption{
 (a,b) Number of correct samples and reconstruction error under varying attack-round budgets on HARUS dataset. (c) Number of attack rounds required for the adversary to identify all correct samples (i.e., achieve verifiability) under different target dataset with varying $\varepsilon_{\mathbf w}$. CTP$_{=}$ presents a baseline with knowledge of $\varepsilon_{\mathbf w}$ (the minimum absolute distance between the projections of any two samples along direction $\mathbf{w}$). 
CTP$_{>}$ presents a optimistic setup on $\varepsilon$, s.t., $\varepsilon = 10^{-4} > \varepsilon_{\mathbf w}$. 
CTP$_{<}$ presents a pessimistic setup on $\varepsilon$, s.t., $\varepsilon < \varepsilon_{\mathbf w}$. \label{fig:harus_res}
  }
    \end{figure*}
Figure~\ref{fig:harus_res} reports the performance of our VGIA attack on the HARUS dataset under the same experimental setting described in Section~\ref{sec:exp_results}, showing the adaptability of our attack to classification tasks. Consistent with the trends observed in the main paper, after an initial phase in which our attack is slowed down by the verification process, VGIA achieves faster convergence on HARUS. However, due to numerical precision errors, neither method is able to fully recover the dataset in this scenario.

Interestingly, when using the same $\varepsilon$ value as in Figure~\ref{fig:income_bad_epsilon}, the resulting performance is almost identical to that observed in Figure~\ref{fig:harus_bad_epsilon}. This highlights that tuning $\varepsilon$ is a dataset-dependent procedure that requires careful calibration. In practice, without additional prior knowledge, an attacker cannot reliably select an appropriate value, limiting the applicability of CTP in tabular domain.

Finally, Figure~\ref{fig:harus_efficiency} demonstrates the advantage of our adaptive search strategy, which allows our attack to save at least 10 communication rounds when $\varepsilon= \varepsilon_{\mathbf w}$. 

\subsection{King County Housing}

\begin{figure*}[h]
    \centering 
    \subfloat[VGIA V.S. CTP$_{=}$\label{fig:king_county_good_epsilon}]{%
        \includegraphics[width=0.3\linewidth]{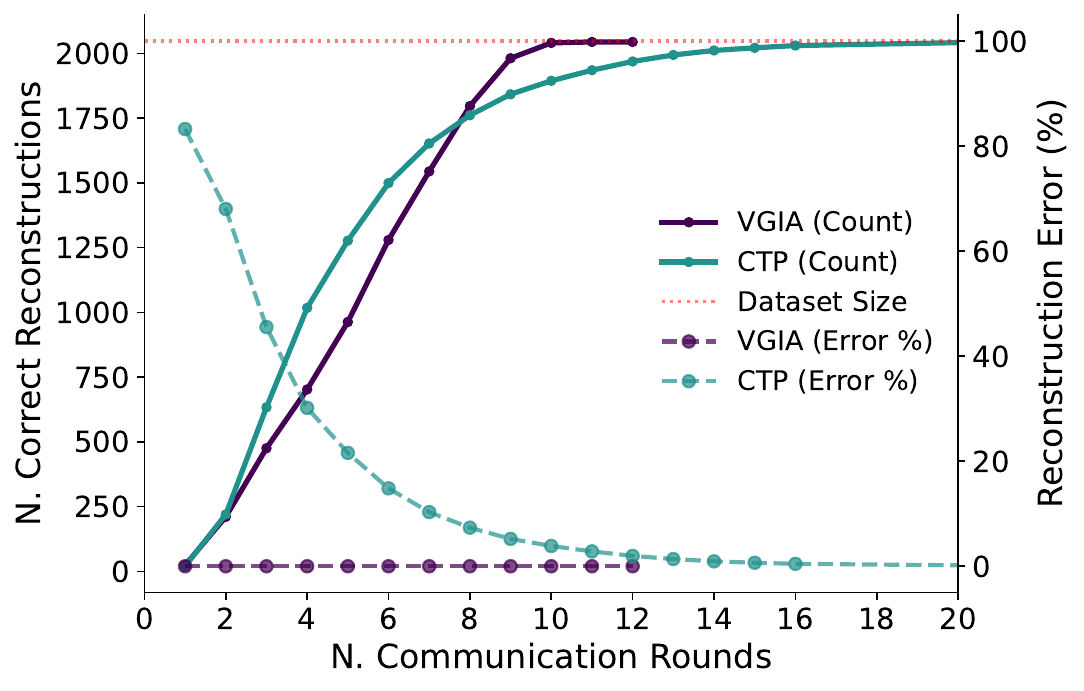}
    }\hspace{0.025\linewidth}
    \subfloat[VGIA V.S. CTP$_{>}$ \label{fig:king_county_bad_epsilon}]{%
        \includegraphics[width=0.3\linewidth]{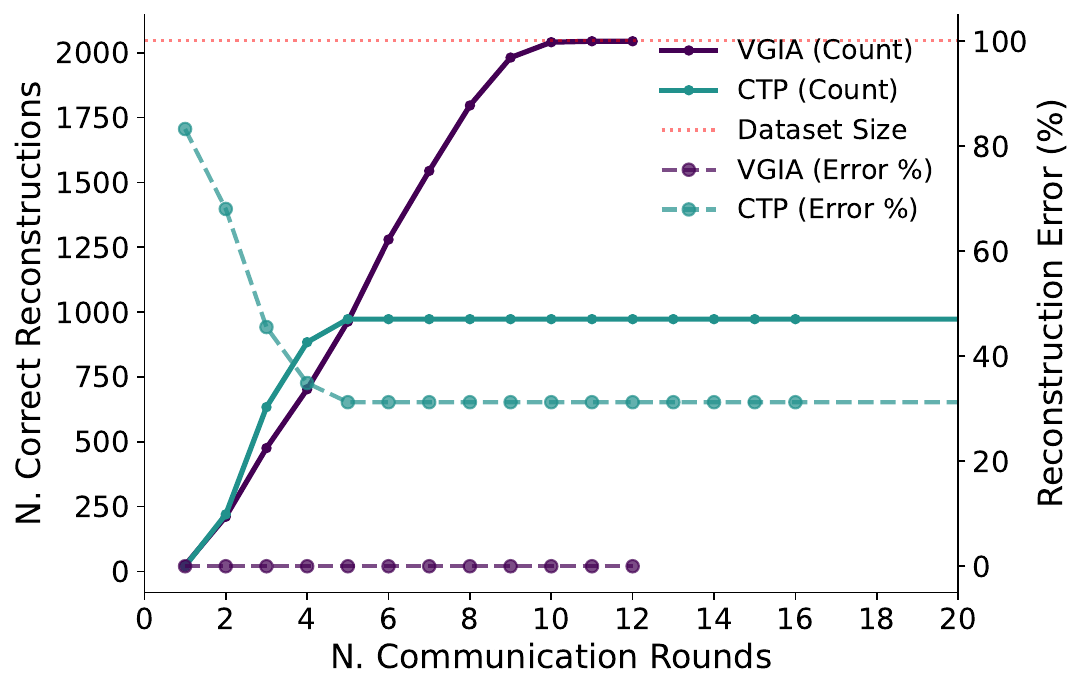}
    }\hspace{0.025\linewidth}
    \subfloat[VGIA V.S. CTP$_{<}$ \label{fig:kch_efficiency}]{%
        \includegraphics[width=0.275\linewidth]{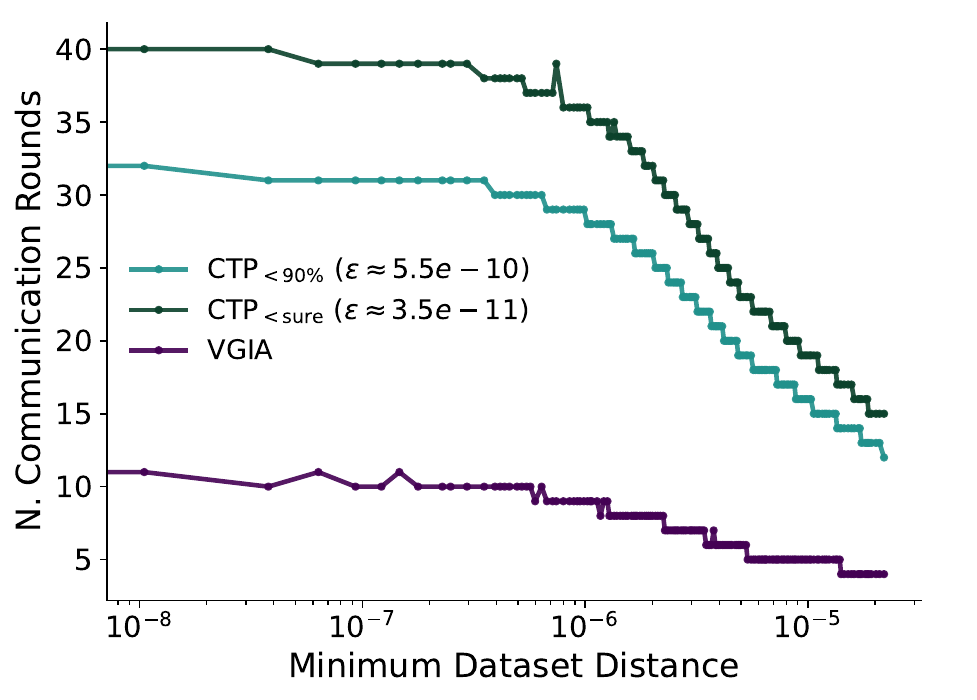}
    }
    \caption{
 (a,b) Number of correct samples and reconstruction error under varying attack-round budgets on King County Housing dataset. (c) Number of attack rounds required for the adversary to identify all correct samples (i.e., achieve verifiability) under different target dataset with varying $\varepsilon_{\mathbf w}$. CTP$_{=}$ presents a baseline with knowledge of $\varepsilon_{\mathbf w}$ (the minimum absolute distance between the projections of any two samples along direction $\mathbf{w}$). 
CTP$_{>}$ presents a optimistic setup on $\varepsilon$, s.t., $\varepsilon = 10^{-4} > \varepsilon_{\mathbf w}$. 
CTP$_{<}$ presents a pessimistic setup on $\varepsilon$, s.t., $\varepsilon < \varepsilon_{\mathbf w}$. \label{fig:king_county_res}
  }
    \end{figure*}
The results on the King County Housing dataset confirm the performance of our attack and its adaptability to different data distributions. As shown in Figure~\ref{fig:king_county_good_epsilon}, VGIA successfully recovers the entire dataset within 10 rounds and certifies the reconstruction quality by the subsequent round. In contrast, the CTP baseline is still attempting to isolate samples after 16 rounds.

Furthermore, Figure~\ref{fig:harus_bad_epsilon} highlights another aspect of the critical dependency of CTP on the choice of $\varepsilon$. Unlike results on HARUS, here we observe that an optimistic estimation of the separation distance compromises the reconstruction process, resulting in missing more than \%50 of the samples. Conversely, Figure~\ref{fig:kch_efficiency} illustrates the cost of the opposing condition: when $\varepsilon=\varepsilon_{\mathbf w}$, the baseline can require more than 40 communication rounds to verifiably isolate the inputs. In this regime, our adaptive approach saves almost 30 communication rounds compared to CTP.
\subsection{CIFAR10}
\begin{figure*}[h]
    \centering 
    \subfloat[VGIA V.S. CTP$_{=}$\label{fig:cifar_good_epsilon}]{%
        \includegraphics[width=0.3\linewidth]{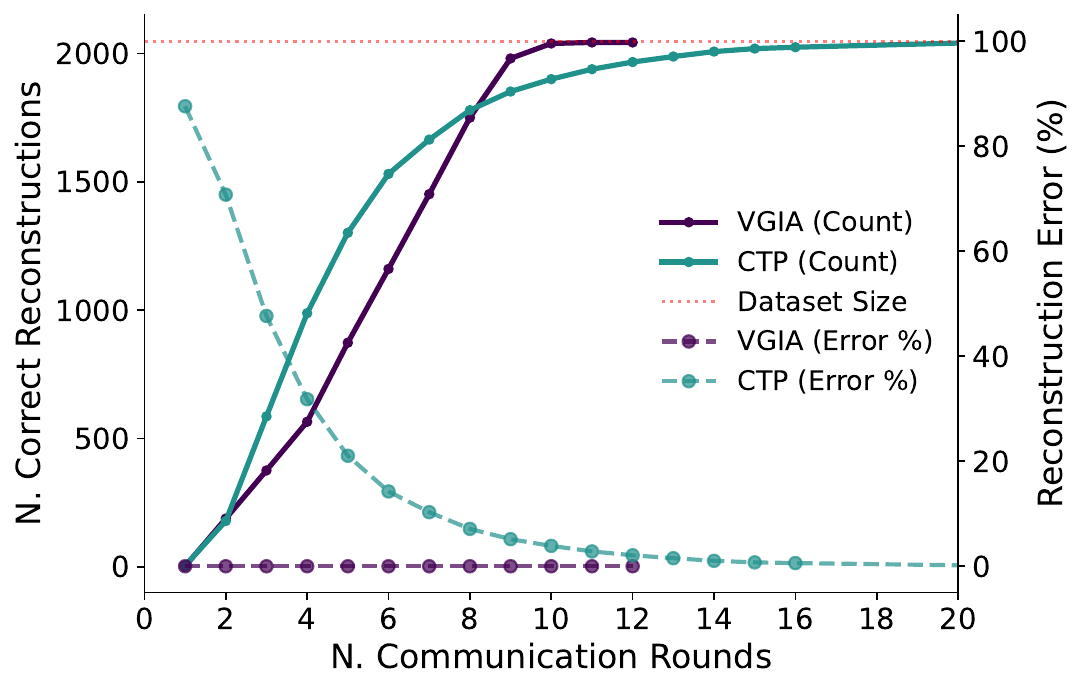}
    }\hspace{0.025\linewidth}
    \subfloat[VGIA V.S. CTP$_{>}$ \label{fig:cifar_bad_epsilon}]{%
        \includegraphics[width=0.3\linewidth]{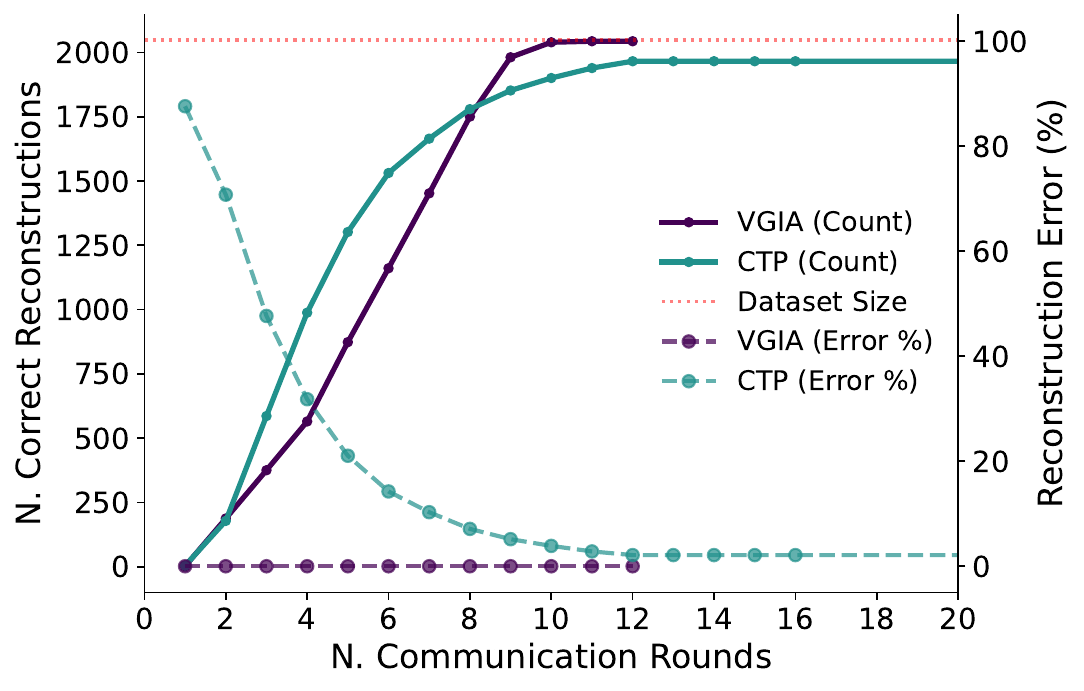}
    }\hspace{0.025\linewidth}
    \subfloat[VGIA V.S. CTP$_{<}$ \label{fig:cifar_efficiency}]{%
        \includegraphics[width=0.275\linewidth]{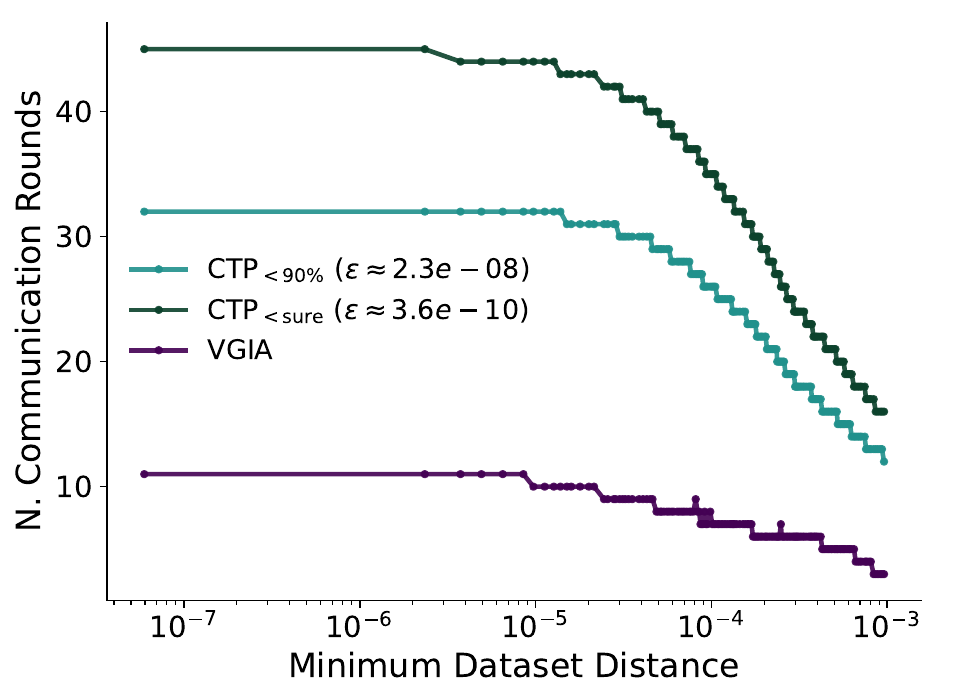}
    }
    \caption{
 (a,b) Number of correct samples and reconstruction error under varying attack-round budgets on CIFAR10 dataset. (c) Number of attack rounds required for the adversary to identify all correct samples (i.e., achieve verifiability) under different target dataset with varying $\varepsilon_{\mathbf w}$. CTP$_{=}$ presents a baseline with knowledge of $\varepsilon_{\mathbf w}$ (the minimum absolute distance between the projections of any two samples along direction $\mathbf{w}$). 
CTP$_{>}$ presents a optimistic setup on $\varepsilon$, s.t., $\varepsilon=10^{-4} > \varepsilon_{\mathbf w}$. 
CTP$_{<}$ presents a pessimistic setup on $\varepsilon$, s.t., $\varepsilon < \varepsilon_{\mathbf w}$. \label{fig:king_county_res}
  }
    \end{figure*}
Results on CIFAR10 remain consistent with those observed across all the other experimental settings, demonstrating that VGIA is not tied to a specific data modality and is therefore not restricted to tabular datasets. Despite not being explicitly designed for classification tasks, our attack completely recovers victim's dataset faster than CTP$_=$, which requires more than the 12 communication rounds needed by our method.

Figure~\ref{fig:cifar_bad_epsilon} further illustrates that VGIA converges more rapidly to the final solution even when the server selects an overly large value of $\varepsilon$, which prevents effective separation of the data points. For this dataset, setting $\varepsilon=10^{-4}$ does not substantially hinder the CTP attacker. By contrast, when the server is assumed to have access to the minimum admissible distance $\varepsilon_{\textrm{min}}$, CTP$_{\textrm{sure}}$ requires nearly four times as many rounds as VGIA, while CTP$_{<\textrm{90}}$ requires more than three times as many. Overall, these results highlight the efficiency of VGIA both in navigating the search space and in certifying the correctness of the reconstructed inputs.

\end{document}